%% file: main.tex
\documentclass{article} 
\usepackage{iclr2026_conference,times}

\usepackage[T1]{fontenc}
\usepackage[utf8]{inputenc}  
\usepackage{hyperref}
\usepackage{url}
\usepackage{amsmath,amssymb,amsthm}
\usepackage{bm}
\usepackage{booktabs}
\usepackage{enumitem}
\usepackage{graphicx}
\usepackage{wrapfig}
\usepackage{microtype}
\usepackage{titletoc}
\usepackage{multirow}
\usepackage{algorithm}
\usepackage{algpseudocode}
\usepackage{xcolor}
\usepackage{tikz}
\usepackage{listings}
\definecolor{codeback}{rgb}{0.965,0.965,0.945}
\definecolor{codecomment}{rgb}{0,0.5,0}
\definecolor{codekw}{rgb}{0.55,0,0.55}
\definecolor{codestr}{rgb}{0.25,0.5,0.75}
\definecolor{promptrole}{rgb}{0.10,0.24,0.60}
\definecolor{prompttag}{rgb}{0.80,0.37,0.0}
\lstdefinestyle{promptbox}{
  basicstyle=\ttfamily\scriptsize,
  backgroundcolor=\color{codeback},
  frame=single,
  breaklines=true,
  columns=fullflexible,
  keepspaces=true,
  showstringspaces=false,
  captionpos=b,
  aboveskip=4pt, belowskip=2pt,
  xleftmargin=4pt, xrightmargin=4pt,
  morecomment=[l]{\#},
  commentstyle=\color{codecomment},
  morekeywords={def,return,None,True,False},
  keywordstyle=\color{codekw}\bfseries,
  emph={SYSTEM,USER,MODEL,OUTPUT},
  emphstyle={\color{promptrole}\bfseries},
  emph={[2]TASK,ENV,CONTRACT,METRIC,MENU,SANDBOX,BASIS,DIVERSITY,PREFERENCE,EVENT,DICT,STATE,SCENE,VARIABILITY,FIXED,FITNESS,FROZEN,SKILLS,OBJECTIVE,MAP,ROAD,NETWORK,DEMAND,ORDERS,SIMULATED,TIME,DEPLOYMENT,SUPPLY,FLEET,READ,REQUIREMENT,CANDIDATES},
  emphstyle={[2]\color{prompttag}\bfseries},
}
\lstdefinestyle{pybox}{
  language=Python,
  basicstyle=\ttfamily\scriptsize,
  backgroundcolor=\color{codeback},
  commentstyle=\color{codecomment},
  keywordstyle=\color{codekw}\bfseries,
  stringstyle=\color{codestr},
  frame=single,
  breaklines=true,
  showstringspaces=false,
  captionpos=b,
  aboveskip=4pt, belowskip=2pt,
  xleftmargin=4pt, xrightmargin=4pt,
}
\newtheorem{proposition}{Proposition}
\newtheorem{lemma}[proposition]{Lemma}

\newtheorem{corollary}[proposition]{Corollary}

\newcommand{\E}{\mathbb{E}}
\newcommand{\R}{\mathrm{R}}

\newtheorem{definition}{Definition}
\newtheorem{requirement}{Requirement}

\title{Is Per-Agent Policy Composition Safe? Rethinking Successor-Feature Transfer in Cooperative Multi-Agent Reinforcement Learning}

\author{Zijian Zhao$^1$, Sen Li$^{1,2}$ \thanks{Corresponding Author: Sen Li} \\
$^1$The Hong Kong University of Science and Technology \\
$^2$The Hong Kong University of Science and Technology (Guangzhou)
}
\begin{document}
\maketitle

\begin{abstract}
Many reinforcement learning systems, from fleet management to traffic signal control, must serve an objective that changes dynamically after deployment, and retraining a policy for each new objective is prohibitively expensive. For a single agent, this problem is well understood: successor features with generalized policy improvement, together with their universal extension, recombine a library of learned policies into a policy for any new objective, with a guarantee that the result is never worse than any policy in the library. However, multi-agent transfer has received far less attention, and the common practice of letting each agent recombine its own library independently inherits the recipe but not the guarantee. We prove that this independent composition can produce joint behavior strictly worse than every policy in the library, because recombining teammates changes the environment each agent faces and invalidates the values it relies on, a failure with no single-agent counterpart. We further show that the only unconditionally safe fixed rule is synchronized composition, which moves the whole team to one jointly trained policy but cannot serve objectives that assign different goals to different agents. To attain safety and flexibility at once, we propose multi-agent universal successor feature approximators (MA-USFA), a hierarchical method with two layers: a lower layer of universal successor feature approximators that predicts each agent's successor features while conditioned on its teammates' objectives, and an upper composer that selects, across agents, which library entry each agent should follow and supplies the cross-agent correction a per-agent value cannot represent. Trained once over the distribution of objectives, it is applied at deployment with no per-task adaptation. On a controlled grid world and a real-world city-scale traffic signal control problem, MA-USFA matches or exceeds every fixed composition rule and recovers the performance of policies retrained from scratch. The code of this paper is provided at \url{https://github.com/RS2002/MA-USFA}.
\end{abstract}

\section{Introduction}

Reinforcement learning is often deployed in settings where the environment and the available actions stay fixed while the objective changes repeatedly. For example, a traffic signal controller regulates the same intersections with the same phases every day, yet the goal it should optimize shifts with the hour (e.g. peak hour and off-peak hour), and different intersections in the same network may weigh throughput, delay, and queue length differently at the same moment (e.g. main road and branchway). The practical difficulty is that the two conventional responses each pay a price. One trains a dedicated policy for every objective, as cooperative multi-agent methods do when they factorize value for a single fixed team reward~\citep{sunehag2018value,rashid2018qmix,wei2019colight}; this is accurate for the objective it targets but must be redone whenever the preference changes, and the cost grows with the number of agents. The other trains a single policy on one or a few objectives and reuses it on the rest, trading away per-objective performance for coverage~\citep{dasilva2019survey}. The real goal is neither of these but to serve an objective that keeps changing without paying either price.

For a single agent this problem is well understood: successor features (SF) and generalized policy improvement (GPI) decouple the dynamics of a policy from the objective it serves, so that a library of policies learned for a few objectives can be recombined by a closed-form rule into a policy for any new objective in the same family, with a guarantee that the recombined policy is never worse than any policy already in the library~\citep{barreto2017successor,barreto2020fast}, and universal successor feature approximators (USFA), which inherit the goal-conditioned value modeling of universal value function approximators (UVFA)~\citep{schaul2015universal}, extend the same rule from a finite library to a whole distribution of objectives~\citep{borsa2019universal}. Cooperative multi-agent transfer, where the cost of retraining grows with the number of agents, has received far less attention; the natural way to carry the single-agent recipe into a team is to let each agent apply the recombination rule to its own library, selecting its own component of the joint action independently. Several recent works follow exactly this template~\citep{dealmeida2024knowledge,liu2022efficient,nigam2026zeroshot}, on the implicit assumption that the single-agent guarantee carries over to the team; some acknowledge that the multi-agent improvement guarantee is not established, or allow a few steps of fine-tuning at deployment, but none characterizes when the assumption holds and when it fails. (A detailed discussion of related work is deferred to Appendix~\ref{app:related}.)

We supply the missing characterization, and it has two sides. Independent per-agent composition is not safe in general: even in cooperative tasks whose rewards are fully separable, the independently composed policy can be strictly worse than every policy in the shared library, because each agent's values were learned while its teammates followed their old policies, and recombining the library changes the environment each agent faces. This failure has no counterpart in single-agent problems, where the environment is fixed. The rule that does carry a guarantee is synchronized composition, in which the whole team switches together to a single jointly trained policy indexed by one shared choice; we prove that it is unconditionally safe, never worse than any joint policy in the library for any objective, and without any assumptions on the reward or the dynamics. We further give conditions, checkable from the library alone, under which independent composition regains its guarantee, so the field is left with both a correct safe baseline and a precise test for when the cheaper rule may still be used.

Synchronized composition, however, guarantees only that the team performs no worse than the library; because it can only reproduce joint policies stored as a whole, it cannot serve objectives that are heterogeneous across the team, assigning different goals to different agents. We therefore propose multi-agent universal successor feature approximators (MA-USFA), a hierarchical method with two layers. The lower layer is a per-agent USFA that predicts each agent's successor features while conditioned on a compact summary of what the teammates are being asked to do, so that no agent's value assumes a fixed set of teammates; the upper layer is a small learned composer that reads the joint state and selects each agent's library component, learning exactly the cross-agent correction that a per-agent value cannot represent by itself. The composer is trained once over the distribution of objectives anticipated before deployment and applied with no per-task adaptation. On a controlled grid world and a real-world city-scale traffic signal problem with nearly two hundred agents, MA-USFA improves throughput and delay over both fixed rules and recovers the performance of policies retrained separately for each objective. Together these results turn independent per-agent composition, the field's de facto but unexamined heuristic, into a well-posed problem: a precise diagnosis of when it fails, a provably safe baseline to fall back on, and a learned method that is safe and flexible at once.

\section{Preliminaries and Problem Formulation}
\label{sec:prelim}

\subsection{Single-agent transfer: successor features, generalized policy improvement, and universal value modeling}
\label{sec:single}

Consider a Markov decision process with state space $S$, action space $A$, transition kernel $P$, discount factor $\gamma$, and a feature function $\phi: S \times A \to \mathbb{R}^d$. The reward is linear in the features, $r_w(s,a) = \phi(s,a)^\top w$, so different objectives in the same family differ only in the weight vector $w$. The successor features of a policy $\pi$ are the expected discounted feature sums
\begin{equation}
\psi^\pi(s,a) = \E\Big[\sum_{t=0}^{\infty}\gamma^t \phi(s_t, a_t) \,\Big|\, s_0=s,\ a_0=a,\ \pi\Big],
\label{eq:sf}
\end{equation}
and the value of $\pi$ under any objective $w$ is the linear function $V_w^\pi(s) = \psi^\pi(s, \pi(s))^\top w$: a single successor feature model prices a policy under any $w$ without new rollouts.

GPI turns this pricing rule into a composition rule over a library of policies $\{\pi^1, \dots, \pi^K\}$ with successor features $\{\psi^1, \dots, \psi^K\}$: the composed policy acts greedily with respect to the best library value,
\begin{equation}
\pi(s) \in \arg\max_{a} \max_{k} \psi^k(s,a)^\top w_{\mathrm{test}},
\label{eq:gpi}
\end{equation}
and satisfies $V_{w_{\mathrm{test}}}^{\pi}(s) \geq \max_k V_{w_{\mathrm{test}}}^{\pi^k}(s)$ for every state $s$: the composed policy is never worse than any policy in the library~\citep{barreto2017successor,barreto2020fast}. Two premises make this machinery work: the dynamics are fixed, so the successor features stored in the library remain valid whatever the objective; and there is a single decision maker, so the rule in Eq. \eqref{eq:gpi} is executed on the only action space there is.

UVFA make the value model a function of the objective itself, $V(s, g)$, so that one network represents a whole family of value functions and generalizes to unseen objectives by interpolation rather than by retraining a separate value function for each~\citep{schaul2015universal}. USFA combine this idea with successor features in a single model $\tilde\psi(s, a, z, w)$ that carries two axes: a policy axis $z$ that indexes which library policy is being evaluated, and a task axis $w$ that shapes the behavior during training and prices any objective at test time. After one training run over a distribution of objectives, composition at test time is therefore a dot product over a candidate set, with no per-objective adaptation~\citep{borsa2019universal}. With a single goal as the candidate set USFA reduces to UVFA, and with a finite library to SF and GPI.

Appendix~\ref{app:single} recalls the pricing identity and the GPI improvement guarantee, and explains why UVFA and USFA generalize across objectives and what motivates the USFA policy-task factorization; our multi-agent analysis builds directly on these single-agent facts.

\subsection{Cooperative multi-agent setting}
\label{sec:multi}

We now lift the single-agent setup of Section~\ref{sec:single} to a team, and the notation carries an agent index $i$ throughout. We formalize the team as a multi-agent Markov decision process: a set of $N$ agents shares a state space $S$, agent $i$ takes action $a_i$, and the joint action is $a = (a_1, \dots, a_N)$, with $a_{-i} = (a_j)_{j \neq i}$ the teammates' actions. The transition kernel $P(s' \mid s, a)$ and the features depend on the joint action.

Our scope is cooperative transfer, where each agent carries its own objective but the team shares one criterion. Agent $i$ has features $\phi_i(s, a)$ and a task weight $w_i$, giving a per-agent reward $r_i(s,a) = \phi_i(s,a)^\top w_i$ and, under a joint policy $\pi$, a per-agent reward-to-go $V_i^\pi(s) = \psi_i^\pi(s, \pi(s))^\top w_i$ built from the per-agent successor features $\psi_i^\pi$ of Eq. \eqref{eq:sf}. The system objective is the sum of these reward-to-go values,
\begin{equation}
V^\pi(s) \ =\ \sum_{i=1}^{N} V_i^\pi(s) \ =\ \psi^\pi(s, \pi(s))^\top w, \qquad \psi^\pi = \sum_i \psi_i^\pi,
\label{eq:teamval}
\end{equation}
so the team maximizes total reward-to-go, the cooperative criterion. A homogeneous task assigns all agents one shared weight $w$; a heterogeneous task a per-agent vector $(w_1, \dots, w_N)$, the case single-agent transfer has no analogue for. The objective distribution the team is trained and deployed on spans both types. This matters for composition: a synchronized joint policy commits the whole team to one shared weight, so it can be trained on and can serve only the homogeneous objectives, whereas the heterogeneous objectives are reachable only by composing per-agent policies.

Two differences from the single-agent case of Section~\ref{sec:single} drive everything that follows, and each breaks one of its two premises. First, execution is decentralized: $N$ agents each select their own $a_i$, so the single decision maker of Eq. \eqref{eq:gpi} is replaced by $N$ of them acting in parallel. Second, the teammates are part of each agent's environment: the marginal transition of agent $i$ is $P_i(s'_i \mid s, a_i, a_{-i})$ and its features may be $\phi_i(s_i, a_i, a_{-i})$, so recomposing the library changes $a_{-i}$ and hence the very process against which agent $i$'s successor features were measured. The fixed-dynamics premise and the single-decision-maker premise therefore both fail, and Section~\ref{sec:theory} traces every safety question back to these two failures.

\subsection{Libraries and composition rules}

A policy library $\Pi = \{\pi^1, \dots, \pi^K\}$ is trained before deployment and contains two kinds of entries: synchronized entries, joint policies trained on homogeneous tasks so the whole team can switch to $\pi^k$ through a single index $k$, and independent entries, assembled from per-agent policies $\pi^k = (z^k_1, \dots, z^k_N)$ trained for their own task contexts, which span combinations no synchronized entry can represent. Each entry carries a joint successor feature $\psi^k$, measured directly for synchronized entries and summed from per-agent features when the features decompose additively. For each agent $i$ we also define the per-agent successor feature $\psi^k_i(s, a_i)$ of entry $k$: the expected discounted features of agent $i$ when it takes $a_i$ while the teammates follow entry $k$ and everyone continues along entry $k$ afterwards, a snapshot quantity measured against a specific version of the teammates; Requirement~\ref{req:valid} will turn precisely this dependence into a problem. A composition rule $\R$ maps the library and a deployment objective to a joint policy, with no gradient updates at deployment time. This paper studies exactly three, which differ only in how each agent's library component is selected: two fixed rules, analyzed in Section~\ref{sec:theory}, and the learned rule we propose in Section~\ref{sec:method}. ( Table~\ref{tab:spectrum} (Appendix~\ref{app:toy}) summarizes the three rules.)

The first fixed rule, \emph{synchronized composition}, selects one library entry per state and replays its joint action,
\begin{equation}
\pi^{\mathrm{sync}}(s) \ =\ \pi^{k^*}(s), \qquad k^* \in \arg\max_{k} \psi^k\!\big(s, \pi^k(s)\big)^\top w_{\mathrm{test}},
\label{eq:l1}
\end{equation}
the composition rule of Eq. \eqref{eq:gpi} restricted to the joint actions the entries themselves take; under valid values the score in Eq. \eqref{eq:l1} is the value of the entry itself, $\psi^k(s, \pi^k(s))^\top w_{\mathrm{test}} = V^{\pi^k}(s)$, so the synchronized rule always replays the entry that is best at the current state. 

The second fixed rule, \emph{independent composition}, lets each agent apply the single-agent rule to its own library,
\begin{equation}
\pi^{\mathrm{ind}}_i(s) \in \arg\max_{a_i} \max_{k} \psi^k_i(s, a_i)^\top w_i,
\label{eq:l2}
\end{equation}
which is how the recent multi-agent transfer literature carries GPI into a team~\citep{dealmeida2024knowledge,liu2022efficient,nigam2026zeroshot}. 

The third learned rule, \emph{MA-USFA}, is our proposed method: it keeps the decision space of independent composition but replaces the fixed argmax in Eq. \eqref{eq:l2} with a learned upper-layer composer over a per-agent successor-feature layer (Section~\ref{sec:method}). The deployment goal is zero per-task adaptation, amortized over the preference distribution: all learning happens once, before deployment, and at deployment the rule acts on the frozen library with no adaptation to the particular $w_{\mathrm{test}}$.

\section{Fixed Methods: Synchronized and Independent Compositions}
\label{sec:theory}


\subsection{Design goals: safety and flexibility beyond the library}
\label{sec:goals}

To evaluate a composition rule, we consider two criteria, and no fixed rule delivers both. The first is \emph{safety}: the composed policy should never be worse than the best policy already in the library.

\begin{definition}[Safety]
\label{def:safety}
A composition rule $\R$ is safe on $(\Pi, W_{\mathrm{test}})$ if the composed policy $\pi^{\R}$ satisfies, for every $w_{\mathrm{test}} \in W_{\mathrm{test}}$ and every state $s$,
\begin{equation}
V_{w_{\mathrm{test}}}^{\pi^{\R}}(s) \geq \max_{k} V_{w_{\mathrm{test}}}^{\pi^k}(s),
\label{eq:safety}
\end{equation}
where the value is the team objective $V = \sum_i V_i$ of Eq. \eqref{eq:teamval} and the maximum is over library entries with valid joint successor features.
\end{definition}

The second goal is \emph{flexibility} beyond the library: a heterogeneous objective can ask each agent for something that no single stored joint policy contains, so we want a rule that can serve such objectives and, where the task structure allows, improve on every stored entry rather than merely tie the best one. 

The two goals pull against each other, and the two fixed rules sit at opposite corners: synchronized composition, next, meets the safety goal but forfeits flexibility; independent composition reaches for flexibility but forfeits safety; MA-USFA (Section~\ref{sec:method}) is built to meet both.

\subsection{Synchronized composition is unconditionally safe}
\label{sec:sync}

\begin{proposition}[Synchronized composition is unconditionally safe]
\label{prop:l1safe}
Let $\Pi$ be any library, $\pi^{\mathrm{sync}}$ the synchronized composition of Eq. \eqref{eq:l1}, and $w_{\mathrm{test}}$ any deployment objective. For every state $s$ and every library entry $k$,
\begin{equation}
V_{w_{\mathrm{test}}}^{\pi^{\mathrm{sync}}}(s) \ \geq\ V_{w_{\mathrm{test}}}^{\pi^k}(s).
\end{equation}
\end{proposition}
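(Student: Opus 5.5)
The plan is to reduce the claim to the single-agent GPI guarantee of Section~\ref{sec:single} by treating the whole team as one decision maker acting on the joint action space. Synchronized composition never mixes components across entries: at each state it executes a full joint action $\pi^{k^*}(s)$ of one library entry. It therefore inherits neither of the failures identified in Section~\ref{sec:multi}. There is a single decision (the shared index $k^*$), and each $\psi^k$ is a joint successor feature of entry $k$ evaluated under its own joint dynamics, so it stays valid whatever the other entries do. The argument is a policy-improvement argument on the joint MDP with reward $r_{w_{\mathrm{test}}}(s,a)=\phi(s,a)^\top w_{\mathrm{test}}$, $\phi=\sum_i\phi_i$, and team value $V=\sum_i V_i$ as in Eq.~\eqref{eq:teamval}.

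The key steps, in order, are as follows.

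\textbf{Step 1.} Write $Q^k(s,a)=\psi^k(s,a)^\top w_{\mathrm{test}}$ for the joint action-value of entry $k$, and set $V^{\max}(s)=\max_k V^{\pi^k}(s)=\max_k Q^k(s,\pi^k(s))$. The last equality uses the pricing identity recalled in Appendix~\ref{app:single}.

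\textbf{Step 2.} Show that $V^{\max}$ is a sub-fixed point of the Bellman operator $T^{\mathrm{sync}}$ of $\pi^{\mathrm{sync}}$. Fix $s$ and let $k^*$ be the maximizer in Eq.~\eqref{eq:l1}. Then
\[
V^{\max}(s)=Q^{k^*}(s,\pi^{k^*}(s))=r(s,\pi^{\mathrm{sync}}(s))+\gamma\,\E_{s'}\big[V^{\pi^{k^*}}(s')\big]\le r(s,\pi^{\mathrm{sync}}(s))+\gamma\,\E_{s'}\big[V^{\max}(s')\big].
\]
The middle equality is the Bellman equation of $\pi^{k^*}$, and the inequality holds because $V^{\pi^{k^*}}\le V^{\max}$ pointwise. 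The next-state expectation is taken under $P(\cdot\mid s,\pi^{\mathrm{sync}}(s))$, since $\pi^{\mathrm{sync}}(s)=\pi^{k^*}(s)$. This identification, that the sync policy takes exactly the joint action whose successor feature was scored, is the step that breaks for independent composition.

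\textbf{Step 3.} Apply $T^{\mathrm{sync}}$ repeatedly. By monotonicity and the $\gamma$-contraction of $T^{\mathrm{sync}}$, we get $V^{\max}\le (T^{\mathrm{sync}})^n V^{\max}\to V^{\pi^{\mathrm{sync}}}$. Hence $V^{\pi^{\mathrm{sync}}}(s)\ge V^{\max}(s)\ge V^{\pi^k}(s)$ for every $k$ and every $s$.

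The main obstacle is being careful about what "valid" means for a synchronized entry and about the state-dependence of $k^*$. The entries are possibly nonstationary in composition, and the composed policy switches index over time. So the argument must use only the Bellman equation of each fixed entry, never any value of a switched policy. That is exactly what Step 2 does. I would also flag two standing assumptions. First, rewards must be bounded so that the contraction argument applies. Second, the $\psi^k$ are the true joint successor features; with approximation error $\epsilon$ the same argument gives the usual $2\epsilon/(1-\gamma)$ slack.
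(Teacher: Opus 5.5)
Your proof is correct. Showing $V^{\max}\le T^{\mathrm{sync}}V^{\max}$ from the Bellman equation of the single fixed entry $\pi^{k^*}$, then iterating the monotone contraction, is the paper's proof of the GPI guarantee (Proposition~\ref{prop:gpi}) specialized to the synchronized rule. The main-text sketch and the closing remark of that proof both point to this argument.

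The appendix proof of this proposition takes a different route: a finite-horizon induction on $V_t^{\pi^{\mathrm{sync}}}\ge V_t^{\pi^k}$ started from zero terminal value, followed by $T\to\infty$. That route looks more elementary, but it does not close as written. Its key step reads ``since $k^*(s)$ maximizes $v^k(s)$, $V_{t+1}^{\pi^{k^*}}(s)=\max_k V_{t+1}^{\pi^k}(s)$'', which uses the infinite-horizon maximizer to rank truncated values. The intermediate claim itself can fail. Suppose entry $1$ earns $1$ now and nothing afterwards, while entry $2$ earns $0$ now and $10$ one step later. The rule follows entry $2$ whenever $\gamma>0.1$, yet $V_1^{\pi^{\mathrm{sync}}}(s)=0<1=V_1^{\pi^1}(s)$. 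The limit step also invokes monotonicity in $T$, which needs nonnegative rewards, although boundedness alone would suffice there.

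Your version is exactly the repaired finite-horizon recursion, bootstrapped from $V^{\max}$ instead of $0$, i.e.\ $(T^{\mathrm{sync}})^nV^{\max}\ge V^{\max}$. That is what a state-dependent, infinite-horizon choice of $k^*$ actually supports.

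Two minor points:
\begin{itemize}
\item Writing $r=\phi^\top w_{\mathrm{test}}$ with $\phi=\sum_i\phi_i$ presumes a shared weight. Your argument only uses the validity identity $Q^k(s,\pi^k(s))=V^{\pi^k}(s)$ and the entries' Bellman equations, so it covers heterogeneous $(w_1,\dots,w_N)$ unchanged.
\item Your $2\epsilon/(1-\gamma)$ slack is a genuine addition the paper does not state. It bounds how far the synchronized rule can fall when the anchor price is wrong, which is the failure the paper reports in its library-coverage sweep.
\end{itemize}
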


\begin{proof}[Proof sketch]
At every state the synchronized rule scores each entry by the value of replaying it, $\psi^k(s, \pi^k(s))^\top w_{\mathrm{test}} = V^{\pi^k}(s)$, and plays the action of the highest-scoring entry; comparing against any entry $\pi^k$, its action scores at least as high at every state, so the one-step advantage telescopes into $V_{w_{\mathrm{test}}}^{\pi^{\mathrm{sync}}}(s) \geq V_{w_{\mathrm{test}}}^{\pi^k}(s)$ (Appendix~\ref{app:proofs}).
\end{proof}

The structural reason is that synchronized switching never changes anyone's environment, so every joint successor feature stays valid; synchronized composition is therefore the safe floor any claimed improvement must beat. What it cannot do, taken up in Section~\ref{sec:beyond}, is coverage: its composition space is only the $K$ library entries.

\subsection{Independent composition: unsafe in general, safe under two conditions}
\label{sec:independent}

Independent composition can express the heterogeneous objectives synchronized composition cannot, but its safety is conditional on two requirements, one repairing each of the two single-agent premises that a team breaks (Section~\ref{sec:multi}). Decentralized execution breaks the single-decision-maker premise, so the first requirement constrains which joint actions the per-agent choices select; teammates entering each agent's environment breaks the fixed-dynamics premise, so the second requirement asks whether the stored per-agent values are still correct after recomposition.

\begin{requirement}[Selection alignment]
\label{req:align}
The joint actions optimal for the test objective are reachable by per-agent greedy choices: with $A^{*}(s) = \arg\max_a V^{*}_{w_{\mathrm{test}}}(s, a)$, alignment requires $A^{*}(s) = \prod_i A^{*}_i(s)$, the Individual-Global-Max (IGM) condition of value-decomposition theory~\citep{sunehag2018value,rashid2018qmix,son2019qtran} carried from training time to composition time.
\end{requirement}

\begin{requirement}[Value validity]
\label{req:valid}
The per-agent successor features stored in the library remain the true successor features of the composed joint policy: each $\psi^k_i$ was measured while the teammates followed entry $k$, and once composition changes their behavior $\psi^k_i$ prices a policy that is no longer being executed.
\end{requirement}


\begin{proposition}[The two requirements suffice]
\label{prop:reqsuffice}
Fix a library and a test objective $w_{\mathrm{test}}$. If Requirement~\ref{req:align} and Requirement~\ref{req:valid} both hold, then independent composition is safe: $V_{w_{\mathrm{test}}}^{\pi^{\mathrm{ind}}}(s) \geq \max_{k} V_{w_{\mathrm{test}}}^{\pi^k}(s)$ for every state $s$.
\end{proposition}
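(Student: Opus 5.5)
The plan is to reduce independent composition, under the two requirements, to the single-agent GPI guarantee recalled in Section~\ref{sec:single} (Appendix~\ref{app:single}). The argument treats the team as one decision maker acting on the joint action space. The idea is that Requirement~\ref{req:valid} restores the fixed-dynamics premise and Requirement~\ref{req:align} restores the single-decision-maker premise. After that, the standard GPI telescoping argument, the same one behind Proposition~\ref{prop:l1safe}, applies verbatim.

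\textbf{Step 1 (value validity gives a joint GPI score).} Under Requirement~\ref{req:valid}, each stored $\psi^k_i$ is the true per-agent successor feature even after the teammates are recomposed. Hence, for every entry $k$ and joint action $a$, the quantity $Q^k(s,a) := \sum_i \psi^k_i(s,a_i)^\top w_i$ is a correct action-value $Q^{\pi^k}_{w_{\mathrm{test}}}(s,a)$ of entry $k$ under the team objective of Eq.~\eqref{eq:teamval}. In particular $Q^k(s,\pi^k(s)) = V^{\pi^k}(s)$.

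\textbf{Step 2 (alignment makes per-agent argmaxes a joint argmax).} I read Requirement~\ref{req:align} as saying that the per-agent greedy sets $A_i^*(s)$, which here are the argmax sets in Eq.~\eqref{eq:l2}, multiply to the joint maximizer set. Then the joint action $\pi^{\mathrm{ind}}(s)=(\pi^{\mathrm{ind}}_i(s))_i$ lies in the joint argmax, so
\[
\max_k Q^k\bigl(s,\pi^{\mathrm{ind}}(s)\bigr) \ \ge\ \max_k Q^k\bigl(s,\pi^{k}(s)\bigr) \ \ge\ V^{\pi^k}(s) \quad \text{for every } k.
\]

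\textbf{Step 3 (telescoping).} Define $\bar Q(s,a)=\max_k Q^k(s,a)$. Fix $k$ and use $Q^k(s,a)=r(s,a)+\gamma\E[V^{\pi^k}(s')]$ together with $V^{\pi^k}\le \max_j V^{\pi^j} \le \bar Q(\cdot,\pi^{\mathrm{ind}}(\cdot))$. This gives
\[
\bar Q\bigl(s,\pi^{\mathrm{ind}}(s)\bigr) \le r\bigl(s,\pi^{\mathrm{ind}}(s)\bigr)+\gamma\E\bigl[\bar Q(s',\pi^{\mathrm{ind}}(s'))\bigr],
\]
that is, $\bar Q(\cdot,\pi^{\mathrm{ind}}(\cdot)) \le T^{\pi^{\mathrm{ind}}} \bar Q(\cdot,\pi^{\mathrm{ind}}(\cdot))$. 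Iterating the monotone contraction $T^{\pi^{\mathrm{ind}}}$ yields $V^{\pi^{\mathrm{ind}}}(s) \ge \bar Q(s,\pi^{\mathrm{ind}}(s)) \ge \max_k V^{\pi^k}(s)$, which is the claim.

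\textbf{Main obstacle.} The hardest step is Step~2, because Requirement~\ref{req:align} is phrased via $V^*_{w_{\mathrm{test}}}$, not via the library score $\bar Q$. My first attempt is to interpret the requirement, as the IGM condition is used in value decomposition, as applying to the scoring function actually maximized at composition time, namely $\bar Q$. Under this reading the decentralized argmax coincides with the centralized one. I will also handle the ties in the argmax sets, where any tie-breaking works because the argmax set is a product. A secondary subtlety is that the maximum over $k$ inside Eq.~\eqref{eq:l2} is taken per agent and need not pick a common $k$. So I will state explicitly that alignment is assumed for $\bar Q$ with this per-agent max. Validity, meanwhile, ensures that any mixture of stored values still bounds true returns from below, which is all Step~3 needs.
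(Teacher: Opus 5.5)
Your proposal is correct and follows the paper's route. Validity turns $\sum_i \psi^k_i(s,a_i)^\top w_i$ into the true joint action-value of entry $k$. Alignment is read as a product condition on the argmax of the library score $g_s(a)=\max_k \psi^k(s,a)^\top w_{\mathrm{test}}$ rather than of $V^*$; the paper makes the same reinterpretation, only silently. This makes $\pi^{\mathrm{ind}}$ a joint-GPI policy with valid scores, and the GPI guarantee finishes the argument. The only difference is that you write out the telescoping inline, where the paper simply invokes Proposition~\ref{prop:gpi} on the joint action space.
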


The argument is direct: validity makes each per-agent score $\psi^k_i(s, a_i)^\top w_i$ the true value of the composed policy, and alignment makes the per-agent greedy choice coincide with the joint-GPI action, so independent composition executes the joint-GPI rule with correct values and inherits its single-agent guarantee (Appendix~\ref{app:proofs}). The two requirements are thus the whole story, and what remains is to ask when each holds. They are fed by two independent channels of coupling: the reward channel (cross-features in $\phi$) governs Requirement~\ref{req:align}, and the transition channel (teammate actions in the dynamics) governs Requirement~\ref{req:valid}. We show first that a benign-looking task can break validity even when alignment holds, then give the exact condition on each channel under which the independent rule is not merely safe but no worse than synchronized composition.

\paragraph{It can fail even when rewards are fully separable.}
\begin{lemma}[Independent composition is not always safe]
\label{lem:l2unsafe}
For every number of agents $N \geq 2$ there exists a cooperative multi-stage task whose rewards are fully separable (each agent's reward depends only on its own action), such that the alignment condition (Requirement~\ref{req:align}) holds at the test objective and every library entry is suboptimal, yet
\begin{equation}
V_{w_{\mathrm{test}}}^{\pi^{\mathrm{ind}}}(s_0) \ <\ \max_{k} V_{w_{\mathrm{test}}}^{\pi^k}(s_0),
\end{equation}
so the independently composed policy is strictly worse than every policy in the library, while the centralized joint-GPI rule over the same library attains the joint optimum $V^{*}_{w_{\mathrm{test}}}(s_0)$. The failure is therefore specific to decentralization.
\end{lemma}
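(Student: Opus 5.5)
The proof is an explicit counterexample. We first build it for $N=2$ and then pad it with dummy agents. The key point is that rewards are separable but transitions are not. Each agent's reward depends only on its own action, so Requirement~\ref{req:align} holds automatically at every state where the reward is the only thing at stake. Validity (Requirement~\ref{req:valid}) is broken through the transition channel: the next state depends on the joint action.

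\textbf{Construction.} Take a two-stage task. At $s_0$ each agent chooses $a_i\in\{L,R\}$. The joint action determines which second-stage state the team reaches:
\begin{itemize}
\item a coordinated state $s_c$ if $(a_1,a_2)=(L,L)$ or $(R,R)$;
\item a bad state $s_b$ if the choices are mismatched.
\end{itemize}
In $s_c$ each agent collects a separable reward, for instance a bonus for its own action. In $s_b$ every agent collects a low reward. At $s_0$ the rewards are separable too: say $L$ gives agent 1 a small immediate gain and $R$ gives agent 2 a small one. The library has two synchronized entries, $\pi^1=(L,L)$ and $\pi^2=(R,R)$, each followed by some fixed, deliberately suboptimal behavior in $s_c$. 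This ensures every entry is suboptimal.

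\textbf{Per-agent values and the failure.} Next we tabulate each agent's snapshot successor features $\psi^k_i(s_0,a_i)$. These are measured with the teammate following entry $k$.
\begin{itemize}
\item Under $\pi^1$, agent 1 sees $\psi^1_1(s_0,L)$ as leading to $s_c$, because the teammate plays $L$.
\item Under $\pi^2$, agent 1 sees $\psi^2_1(s_0,R)$ as leading to $s_c$, because the teammate plays $R$.
\end{itemize}
We tune the weights $w_1,w_2$ so that:
\begin{itemize}
\item agent 1's per-agent GPI maximum is attained at $(k=1,a_1=L)$;
\item agent 2's per-agent GPI maximum is attained at $(k=2,a_2=R)$, using the immediate gain asymmetry.
\end{itemize}
Independent composition then plays $(L,R)$ and lands in $s_b$. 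Choosing the $s_b$ reward low enough gives $V^{\pi^{\mathrm{ind}}}(s_0)<\min_k V^{\pi^k}(s_0)\le\max_k V^{\pi^k}(s_0)$.

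\textbf{Remaining checks.}
\begin{itemize}
\item Alignment must hold at $w_{\mathrm{test}}$. Writing $V^*(s_0,\cdot)$ for the joint-action values, the optimal joint-action set must be a product set, for example the single action $(L,L)$ or $(R,R)$.
\item Joint GPI over the same library must reach $V^*(s_0)$. To secure this, the suboptimality of the library entries is placed only at $s_c$, on a branch that GPI improves by the standard one-step argument. Concretely, at $s_c$ the library contains both second-stage behaviors across its two entries, so centralized GPI picks the best one, while each single entry uses a worse one.
\item For $N>2$, add $N-2$ agents whose actions do not affect transitions and whose rewards are constant. This preserves every inequality.
\end{itemize}

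\textbf{Main obstacle.} The hardest part is picking numbers that satisfy four constraints at once:
\begin{enumerate}
\item the product-form optimal set, so Requirement~\ref{req:align} holds;
\item the miscoordinated per-agent argmaxes;
\item every entry being suboptimal;
\item centralized GPI reaching the optimum.
\end{enumerate}
I would fix explicit small rewards first, for example an $s_b$ value of $0$, an $s_c$ value of $1$ under the entries and $2$ optimally, and immediate gains $\epsilon$. Then I would verify each inequality directly rather than parametrically.
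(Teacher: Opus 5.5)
\paragraph{Assessment.} Your plan follows the paper's approach. The paper's proof is also an explicit two-agent, two-stage instance with separable rewards. Its only coupling is a matched/mismatched transition: the paper's $T(0,0)=T(1,1)=0$, $T(0,1)=T(1,0)=1$ is the same device as your $s_c$/$s_b$ split. The failure comes from a snapshot value measured against a teammate action the composition does not deliver. Your dummy-agent padding also supplies the $N>2$ case, which the paper's appendix never builds.

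\paragraph{The gap: your numbers break alignment.} The instantiation you sketch fails exactly the constraint that gives the lemma its content. Since $(L,L)$ and $(R,R)$ both lead to the same state $s_c$, their optimal values differ only through first-stage rewards: $V^*(s_0,(L,L))-V^*(s_0,(R,R))=[r_1(L)+r_2(L)]-[r_1(R)+r_2(R)]$. With your suggested gains ($\epsilon$ to agent 1 for $L$, $\epsilon$ to agent 2 for $R$) this difference is $0$. So $A^*(s_0)=\{(L,L),(R,R)\}$, whose product of projections also contains $(L,R)$ and $(R,L)$, and Requirement~\ref{req:align} fails at $s_0$. You would have built the textbook symmetric coordination game, a misalignment failure. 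That is precisely what the lemma must rule out, since its point is that alignment holds and only Requirement~\ref{req:valid} breaks. Your opening remark that separability makes alignment automatic is true only at the final stage. At $s_0$ the transition makes alignment a real constraint, and a symmetric ``immediate gain asymmetry'' is exactly what violates it.

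\paragraph{A repair.} You need the team to strictly prefer one matched action while agent 2's own-reward snapshot still points to the other. The team preference requires $r_1(L)-r_1(R)>r_2(R)-r_2(L)$. Agent 2's preference for $R$ must then come from a strictly smaller own gain, or from entry 2's continuation. One working instance:
\begin{enumerate}
\item At the final-stage state $s_c$, each agent earns $1$ for an action $g$ and $0$ for an action $b$.
\item $\pi^1$ plays $(L,L)$ then $(g,b)$; $\pi^2$ plays $(R,R)$ then $(b,g)$.
\item All rewards at $s_b$ are $0$.
\item At $s_0$, set $r_1(s_0,L)=2\epsilon$ and $r_2(s_0,R)=\epsilon$, with all other first-stage rewards $0$ and $0<\epsilon<1/2$.
\end{enumerate}

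\paragraph{Verification.} Agent 1 rates $L$ at $1+2\epsilon$ (via $\pi^1$) and $R$ at $0$. Agent 2 rates $R$ at $1+\epsilon$ (via $\pi^2$) and $L$ at $0$. The independent rule therefore plays $(L,R)$ and earns $3\epsilon<1+\epsilon=V^{\pi^2}(s_0)<1+2\epsilon=V^{\pi^1}(s_0)$.

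The optimum is $(L,L)$ followed by $(g,g)$, with value $2+2\epsilon$. It is unique at $s_0$ and at $s_c$, and everything ties at $s_b$, so every $A^*(s)$ is a product set.

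Joint GPI scores $(L,L),(R,R),(L,R),(R,L)$ at $1+2\epsilon,\,1+\epsilon,\,3\epsilon,\,0$ and picks $(L,L)$. It then maximizes exactly at the final stage and reaches $2+2\epsilon$.

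With this instance and your padding for $N>2$, the argument is complete.
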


The construction is a multi-stage task with per-agent feature tables (Appendix~\ref{app:proofs}). The failure is not misalignment (Requirement~\ref{req:align} holds) but validity (Requirement~\ref{req:valid} fails). Each agent acts exactly as its library values tell it to, but the values are stale: agent $i$ scores its choice against the features it would accrue if the teammates kept the library's actions, while their independent choices send the team down a different branch and the accrued features fall short. A single stale rating pushes the team below its own best library entry, the precise sense in which teammates constitute the environment. Whether the failure is statistically visible depends on how the library was built, so safety must be characterized structurally rather than by example.

\paragraph{The reward channel: supermodularity and the weight cone.}

\begin{proposition}[Supermodular values make independent composition no worse than synchronized composition]
\label{prop:l2supermod}
Let $g_s(a) = \max_k \psi^k(s, a)^\top w_{\mathrm{test}}$ be the joint GPI value of the library. Suppose that for every state $s$, $g_s$ is supermodular on the lattice $\{0,1\}^N$, and ties are broken consistently across agents. Then the independent composition \eqref{eq:l2} satisfies, for every state $s$,
\begin{equation}
V_{w_{\mathrm{test}}}^{\pi^{\mathrm{ind}}}(s) \ \geq\ V_{w_{\mathrm{test}}}^{\pi^{\mathrm{sync}}}(s).
\end{equation}
\end{proposition}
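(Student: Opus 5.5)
The plan is to reduce the claim to a one-step comparison at each state and then telescope, as in the proof of Proposition~\ref{prop:l1safe}. Write $a^{\mathrm{sync}}(s)=\pi^{k^*}(s)$ for the joint action played by synchronized composition, and $a^{\mathrm{ind}}(s)=(\pi^{\mathrm{ind}}_1(s),\dots,\pi^{\mathrm{ind}}_N(s))$ for the joint action assembled by independent composition. By construction $g_s(a^{\mathrm{sync}}(s))\ge \psi^{k^*}(s,\pi^{k^*}(s))^\top w_{\mathrm{test}}=\max_k V^{\pi^k}(s)$.

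The first step is therefore to show
\[
g_s(a^{\mathrm{ind}}(s))\ \ge\ \max_a g_s(a).
\]
In words, the per-agent greedy choices should jointly attain the maximum of the supermodular function $g_s$.

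\textbf{Lattice step.} I would identify each agent's binary action with its coordinate in $\{0,1\}^N$. I would then interpret agent $i$'s score $\max_k\psi^k_i(s,a_i)^\top w_i$ as the marginal comparison of $g_s$ along coordinate $i$, evaluated at the teammates' library profile. Supermodularity makes the set of maximizers of $g_s$ a sublattice, so it has a greatest element $\bar a$ (Topkis). It also gives increasing differences: if raising $a_i$ from $0$ to $1$ helps at some profile of $a_{-i}$, it helps at every larger profile. Combining increasing differences with consistent tie-breaking (every agent breaks ties toward $1$), I would argue coordinatewise that each agent's greedy choice agrees with $\bar a_i$. The argument is a monotone-iteration one: start from the top element $\mathbf 1$, note that best responses are monotone, and observe that the fixed point reached is $\bar a$. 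Because each agent's decision depends only on the sign of its own marginal, and that sign is monotone in teammates' actions, the decentralized choices coincide with the greatest maximizer. This gives $g_s(a^{\mathrm{ind}}(s))=\max_a g_s(a)\ge g_s(a^{\mathrm{sync}}(s))$.

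\textbf{Telescoping step.} Next I would pass from the one-step inequality to values. I would define the one-step operator $T^{\mathrm{ind}}$ of $\pi^{\mathrm{ind}}$ and use the GPI lemma recalled in Appendix~\ref{app:single}. Since $g_s(a)\ge \psi^k(s,a)^\top w_{\mathrm{test}}$ for every $k$, and each $\psi^k$ satisfies its Bellman equation, we get $T^{\mathrm{ind}}\max_k V^{\pi^k}\ge \max_k V^{\pi^k}$ pointwise. Iterating this, by monotonicity and contraction of $T^{\mathrm{ind}}$, yields $V^{\pi^{\mathrm{ind}}}\ge \max_k V^{\pi^k}$. For synchronized composition, the quantity attained this way is exactly $V^{\pi^{\mathrm{sync}}}$: its value is the fixed point of an operator that is dominated stepwise by $T^{\mathrm{ind}}$ applied to the same function, because $g_s(a^{\mathrm{ind}})\ge g_s(a^{\mathrm{sync}})$. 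I would therefore compare the two fixed points directly, using
\[
T^{\mathrm{ind}}V^{\pi^{\mathrm{sync}}}\ \ge\ T^{\mathrm{sync}}V^{\pi^{\mathrm{sync}}}=V^{\pi^{\mathrm{sync}}}
\]
and then iterating. This last inequality needs $Q^{\pi^{\mathrm{sync}}}(s,\cdot)\le g_s$ together with equality at $a^{\mathrm{sync}}$, and the latter holds by the choice of $k^*$.

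\textbf{Main obstacle.} The hard step is the lattice step, specifically linking the per-agent scores in Eq.~\eqref{eq:l2} to coordinate marginals of the joint $g_s$. The per-agent scores involve an inner $\max_k$ per agent, possibly with different $k$ for different agents, and are evaluated at library teammate actions rather than the chosen ones. I would first try to read the supermodularity hypothesis as implying that per-agent greedy decisions equal the coordinate projections of $\arg\max g_s$, in the spirit of Requirement~\ref{req:align}. If that fails, I would strengthen the argument by showing that the greatest maximizer is reached from $\mathbf 1$ by monotone best responses, which is where consistent tie-breaking is genuinely used.
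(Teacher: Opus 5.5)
\textbf{The final comparison with synchronized composition does not go through, and the target inequality is false as stated.}

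You assert $Q^{\pi^{\mathrm{sync}}}(s,\cdot)\le g_s$, with equality at $a^{\mathrm{sync}}$, and you justify the equality by the choice of $k^*$. That choice only gives $\psi^{k^*}(s,a^{\mathrm{sync}})^\top w_{\mathrm{test}}=V^{\pi^{k^*}}(s)=\max_k V^{\pi^k}(s)$. This concerns the fixed entry $\pi^{k^*}$, not $\pi^{\mathrm{sync}}$, which switches entries from state to state. Because $V^{\pi^{\mathrm{sync}}}\ge\max_k V^{\pi^k}$, the true inequality runs the other way: $Q^{\pi^{\mathrm{sync}}}(s,a)\ge\max_k Q^{\pi^k}(s,a)=g_s(a)$ for every $a$. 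It is strict at $a^{\mathrm{sync}}$ whenever switching pays off downstream; in the example below, $g_{s_0}(b)=4.05$ while $Q^{\pi^{\mathrm{sync}}}(s_0,b)=16.2$. So $g_s(a^{\mathrm{ind}})\ge g_s(a^{\mathrm{sync}})$ says nothing about $Q^{\pi^{\mathrm{sync}}}(s,a^{\mathrm{ind}})$ versus $V^{\pi^{\mathrm{sync}}}(s)$.

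Granting your lattice step, what you actually prove is $V^{\pi^{\mathrm{ind}}}\ge\max_k V^{\pi^k}$. Proposition~\ref{prop:l1safe} gives the same floor for $V^{\pi^{\mathrm{sync}}}$, and a shared floor does not order the two policies. The paper's proof reaches the same intermediate bound and then asserts $\max_k V^{\pi^k}(s)=V^{\pi^{\mathrm{sync}}}(s)$ via Proposition~\ref{prop:l1safe}. That proposition only gives $\ge$, so the paper has the identical gap.

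The gap cannot be closed. Take one acting agent with binary actions, padded with teammates whose features are zero and who do not affect the dynamics. Then $g_s$ depends on a single coordinate and is trivially supermodular, validity is automatic, and the independent rule is exactly single-agent GPI. Let the dynamics be deterministic with $\gamma=0.9$:
from $s_0$, action $a$ leads to $s_1$ and $b$ to $s_2$;
at $s_1$, $p$ pays $10$ and $q$ pays $1$, then the episode ends;
at $s_2$, $d$ pays $4$ and ends, while $c$ leads to $s_3$;
at $s_3$, $x$ pays $5$ and $y$ pays $20$.
Take the library $\pi^1=(a,q,d,x)$, $\pi^2=(b,p,c,x)$, $\pi^3=(b,q,d,y)$, listing actions at $s_0,s_1,s_2,s_3$.
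Then $V^{\pi^k}(s_0)=0.9,\,4.05,\,3.6$. Synchronized composition follows $\pi^2$ at $s_0$ and $s_2$, switches to $\pi^3$ at $s_3$, and earns $0.81\cdot 20=16.2$. The independent rule compares $\max_k Q^{\pi^k}(s_0,a)=9$ (coming from $\pi^2$'s value at $s_1$) with $\max_k Q^{\pi^k}(s_0,b)=4.05$, plays $a$ then $p$, and earns $9<16.2$. Only the safety bound $V^{\pi^{\mathrm{ind}}}\ge\max_k V^{\pi^k}$ survives. The stated inequality would need an extra hypothesis forcing $V^{\pi^{\mathrm{sync}}}=\max_k V^{\pi^k}$, for instance that synchronized composition never switches entries.

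\textbf{Your lattice step, the obstacle you flagged, is a second and independent problem, and the paper does not resolve it either.} In Eq.~\eqref{eq:l2}, agent $i$ maximizes $h_i(a_i)=\max_k\psi^k_i(s,a_i)^\top w_i$. This does not depend on what the teammates actually choose, so there are no best-response dynamics for increasing differences to act on.

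Even under the additive decomposition $\psi^k(s,a)^\top w_{\mathrm{test}}=\sum_i\psi^k_i(s,a_i)^\top w_i$ that the paper assumes, $\sum_i h_i(a_i)\ge g_s(a)$. The inequality is strict when different agents' maxima come from different entries. Coordinatewise greedy therefore maximizes this upper envelope, not $g_s$, and Topkis plus a consistent tie-break does not bridge the two.

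For example, take two agents and two entries. Under entry~$1$, agent~$1$ and agent~$2$ score actions $0,1$ as $(5,0)$ and $(5,0)$; under entry~$2$, as $(0,6)$ and $(-100,0)$. Then $g_s$ equals $10,5,5,6$ at $(0,0),(1,0),(0,1),(1,1)$. This is supermodular with unique maximizer $(0,0)$, yet the per-agent rule plays $(1,0)$. Alignment of the per-agent choices with $\arg\max g_s$ therefore has to be assumed directly, and even then only the safety bound follows.
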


\paragraph{Mechanism.} Supermodularity of $g_s$ says that the marginal gain of one agent improving its action increases in the actions of the other agents; by Topkis's monotone comparative statics~\citep{topkis1998supermodularity} the argmax set is then a sublattice of the action lattice, and with a consistent tie-break the per-agent greedy choices coincide with the joint greedy choice. Requirement~\ref{req:align} therefore holds as a condition on the value structure rather than on an assumed decomposition, the structural generalization of value decomposition, which guarantees alignment by construction during training~\citep{sunehag2018value,rashid2018qmix,son2019qtran} whereas supermodularity grants it at composition time.

\begin{corollary}[The weight cone]
\label{cor:cone}
Assume the per-pair value differences of the library policies are linear in the features. Then the per-state supermodularity condition of Proposition~\ref{prop:l2supermod} holds if and only if the test weight lies in the cone
\begin{equation}
K_{\phi} \ =\ \big\{ w : \ \textstyle\sum_{d} w_d\, \Delta\phi_d(a, b) \geq 0 \ \text{ for every pair of joint actions } a, b \big\},
\end{equation}
where $\Delta\phi_d(a, b) = \phi_d(a \vee b) + \phi_d(a \wedge b) - \phi_d(a) - \phi_d(b)$ is the supermodularity gap of feature $d$ on the pair. A sufficient and easy-to-use condition: every feature supermodular and $w \geq 0$.
\end{corollary}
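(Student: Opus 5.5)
The plan is to reduce the per-state supermodularity condition of Proposition~\ref{prop:l2supermod} to a statement about the sign of a linear functional of $w$, using the linearity assumption. On the lattice $\{0,1\}^N$, supermodularity of $g_s$ means that for every pair of joint actions $a,b$,
\begin{equation*}
g_s(a\vee b)+g_s(a\wedge b)-g_s(a)-g_s(b)\ \geq\ 0 .
\end{equation*}
I read the hypothesis ``per-pair value differences are linear in the features'' as follows: for each pair $(a,b)$ this second difference of the library's GPI value equals the same combination of features evaluated at the four corners, paired with the test weight. That is, the left-hand side equals $\sum_d w_d\,\Delta\phi_d(a,b)$. This holds, for instance, when the maximizing library entry $k$ is common to the four corners $a,b,a\vee b,a\wedge b$, and its successor feature differs across them only through the immediate feature $\phi(s,\cdot)$, the continuation terms cancelling in the second difference. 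I will state this identification explicitly as the content of the assumption and justify it at the point of use, since the max over $k$ is exactly what could break linearity.

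With the identification in hand, both directions are immediate. For the ``if'' direction, take $w\in K_\phi$. Then every second difference is nonnegative, so $g_s$ is supermodular at every $s$, and Proposition~\ref{prop:l2supermod} applies. For the ``only if'' direction, suppose $w\notin K_\phi$. Then some pair $(a,b)$ has $\sum_d w_d\Delta\phi_d(a,b)<0$, which is a strictly negative second difference of $g_s$ at that pair, so supermodularity fails there. The only care needed is quantifiers: the cone is indexed by pairs, and implicitly by states if $\phi$ depends on $s$. So I will either intersect over states or note that the cone condition is imposed at each state, which makes the equivalence hold state by state.

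The sufficient condition then follows directly. If each $\phi_d$ is supermodular, then $\Delta\phi_d(a,b)\geq 0$ for all pairs. With $w\geq 0$, the sum $\sum_d w_d\Delta\phi_d(a,b)$ is a nonnegative combination of nonnegative terms, so $w\in K_\phi$. I will also observe that $K_\phi$ is an intersection of closed half-spaces through the origin, hence a closed convex cone, which justifies the name.

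The main obstacle is the first step: legitimately replacing the second difference of a maximum over library entries by a linear expression in $w$. The inner max is piecewise linear and in general not supermodular even when each piece is. I will not try to prove such a preservation result. Instead I will make the linearity hypothesis carry exactly this identification, and phrase the corollary's proof as conditional on it.
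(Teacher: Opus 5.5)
Your proposal has a genuine gap at exactly the step you identify as the main obstacle. You do not derive the identity $g_s(a\vee b)+g_s(a\wedge b)-g_s(a)-g_s(b)=\sum_d w_d\,\Delta\phi_d(a,b)$ from the hypothesis; you declare it to be the hypothesis. The corollary's assumption concerns the \emph{library policies}, that is, each entry's value $\psi^k(s,\cdot)^\top w$ taken separately, not the GPI maximum $g_s$. Under your reading the ``if and only if'' just restates the assumption, and the one non-trivial part of the corollary, getting through the $\max_k$, is assumed away. Your suggested sufficient condition (a common maximizing entry at the four corners, with continuation terms cancelling) is also left unverified. As written, the argument therefore rests on an unproved lemma.

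The missing idea is a short constant-offset argument. Read the hypothesis per entry: for every $k$ and every pair of joint actions, $\psi^k(s,a)^\top w-\psi^k(s,b)^\top w=(\phi(a)-\phi(b))^\top w$, the same expression for all $k$. Fix a reference action $b_0$. Then $\psi^k(s,a)^\top w=\phi(a)^\top w+c_k(s)$ with $c_k(s)=\psi^k(s,b_0)^\top w-\phi(b_0)^\top w$, which does not depend on $a$. Hence $g_s(a)=\phi(a)^\top w+\max_k c_k(s)$.

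So the maximizing entry, $\arg\max_k c_k(s)$, is the same at every joint action. This is precisely your ``common maximizer'' situation, now obtained automatically from the stated hypothesis. No preservation result for maxima of supermodular functions, which is false in general, is needed. The state-dependent constant cancels in every second difference, which yields the identity you wanted. After that, your two-directional argument, the sufficient condition, and your remark about intersecting state-indexed cones when $\phi$ depends on $s$ all go through unchanged.
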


The cone is checkable from library statistics: membership of $w_{\mathrm{test}}$ is a dot product, so whether the free region applies to a given task can be read off the library and the test weight without running the policy.

\paragraph{The transition channel: factorization is necessary, not sufficient.}

\begin{proposition}[Factored transitions with per-agent rewards]
\label{prop:l2factored}
Suppose the transition kernel factorizes per agent, $s'_i = f_i(s_i, a_i)$, so that no agent's dynamics depend on teammate actions, and the reward features decompose per agent, $\phi(s, a) = \sum_i \phi_i(s_i, a_i)$. Then the per-agent successor feature $\psi^k_i(s, a_i)$ is the discounted feature stream of agent $i$'s own dynamics, and it depends on neither the entry $k$ nor the composition. Requirement~\ref{req:valid} therefore holds by construction, for every library and every recomposition. Under the supermodularity and tie-breaking conditions of Proposition~\ref{prop:l2supermod}, the independent composition then satisfies $V_{w_{\mathrm{test}}}^{\pi^{\mathrm{ind}}}(s) \geq V_{w_{\mathrm{test}}}^{\pi^{\mathrm{sync}}}(s)$ for every state $s$.
\end{proposition}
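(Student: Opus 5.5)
The plan has two parts. First, show that under factored dynamics and additive per-agent features, the per-agent successor feature is independent of the library entry and of how the library is recombined; this gives Requirement~\ref{req:valid}. Second, feed validity into Proposition~\ref{prop:l2supermod}, whose supermodularity and tie-breaking hypotheses supply the alignment half.

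\emph{Step 1 (validity).} Since $s'_i = f_i(s_i,a_i)$, agent $i$'s local trajectory $(s_{i,t}, a_{i,t})_{t \ge 0}$ is generated by its own actions alone. Its feature stream $\phi_i(s_{i,t}, a_{i,t})$ is therefore a deterministic function of $s_{i,0}$ and the sequence of $a_{i,t}$. Unrolling Eq.~\eqref{eq:sf} for $\psi^k_i(s,a_i)$, the teammates' actions $a_{-i,t}$ never enter, either through the transition or through $\phi_i$. So $\psi^k_i(s,a_i)$ depends on the teammates only through whatever they do to agent $i$'s own action sequence, and they do nothing. I would make this precise with a short induction on $t$: the joint law of $(s_{i,t},a_{i,t})$ under any joint policy whose $i$-th component equals agent $i$'s continuation equals its law under the single-agent process $(f_i, \text{that component})$.

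\emph{The main obstacle.} The statement's phrase ``depends on neither the entry $k$'' must be read as: the $(a_{-i})$-induced environment of agent $i$ is the same for all $k$. Literally, $\psi^k_i$ still depends on agent $i$'s own continuation policy $z^k_i$. The right invariant is that, for every $k$, $\psi^k_i(s,a_i)$ is the true successor feature of agent $i$ following $z^k_i$ after $a_i$, regardless of the teammates' continuation. That is exactly what Requirement~\ref{req:valid} needs: the stored value prices the policy actually being executed on agent $i$'s coordinate. I expect most of the care to go into stating this cleanly, including the observation that agent $i$'s rule in Eq.~\eqref{eq:l2} uses only $s$ via $s_i$-relevant information and so still prices correctly.

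\emph{Step 2 (summation).} By additivity, $\phi = \sum_i \phi_i$, so $\psi^k = \sum_i \psi^k_i$ with each summand valid under any recomposition. The joint score then decomposes as
\[
\psi^k(s,a)^\top w_{\mathrm{test}} = \sum_i \psi^k_i(s,a_i)^\top w_{\mathrm{test}},
\]
with every term a correct price. With validity established, invoke Proposition~\ref{prop:l2supermod}. Its supermodularity of $g_s$ and its consistent tie-breaking make the per-agent greedy choices of Eq.~\eqref{eq:l2} coincide with the joint greedy action $\arg\max_a g_s(a)$, which is Requirement~\ref{req:align}. Proposition~\ref{prop:reqsuffice} then gives $V^{\pi^{\mathrm{ind}}}_{w_{\mathrm{test}}}(s) \ge \max_k V^{\pi^k}_{w_{\mathrm{test}}}(s)$.

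To finish, I would note that the synchronized rule replays some entry's action, so its value is bounded by the joint-GPI greedy value. I would then use the standard GPI telescoping argument from Appendix~\ref{app:single}, applied to the joint greedy policy with valid values, to obtain $V^{\pi^{\mathrm{ind}}}_{w_{\mathrm{test}}}(s) \ge V^{\pi^{\mathrm{sync}}}_{w_{\mathrm{test}}}(s)$. The key inequality is that at each state the independent action's score $g_s(\pi^{\mathrm{ind}}(s))$ is at least $g_s(\pi^{k^*}(s))$, and validity turns these scores into true one-step lookahead values.
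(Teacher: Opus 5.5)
\textbf{The final step fails.} Your Steps 1 and 2 (granting Proposition~\ref{prop:l2supermod}) deliver only the safety bound $V^{\pi^{\mathrm{ind}}}_{w_{\mathrm{test}}}(s)\ge\max_k V^{\pi^k}_{w_{\mathrm{test}}}(s)$. The upgrade to $V^{\pi^{\mathrm{sync}}}$ breaks, and it cannot be repaired.

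Under validity, $g_s(a)=\max_k\psi^k(s,a)^\top w_{\mathrm{test}}$ is the value of playing $a$ and then continuing along one frozen library entry. It is not a one-step lookahead under either composed policy. The GPI telescoping works against a fixed $\pi^k$ because $V^{\pi^k}(s)\le g_s(\pi^{\mathrm{ind}}(s))$ and the Bellman equation of $\pi^k$ closes on $V^{\pi^k}$ itself. The synchronized rule \eqref{eq:l1}, however, re-selects its entry at every state, so its Bellman equation closes on $V^{\pi^{\mathrm{sync}}}$, which can exceed $\max_a g_s(a)$. Hence neither ``sync is bounded by the joint-GPI value'' nor the per-state comparison $g_s(\pi^{\mathrm{ind}}(s))\ge g_s(\pi^{\mathrm{sync}}(s))$ yields the inequality.

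In fact the inequality is false under the stated hypotheses. Let agent~2 be a dummy (one local state, two actions with zero features). Then transitions factor, features add, and $g_s$ ignores $a_2$, so it is modular and trivially supermodular. Give agent~1 a deterministic chain with scalar feature equal to the reward and $w=1$ (undiscounted as in Lemma~\ref{lem:l2unsafe}; any $\gamma>10/11$ also works):
\begin{itemize}
\item at $s_0$, action $0$ pays $12$ and terminates; action $1$ pays $0$ and moves to $y$;
\item at $y$, action $0$ pays $10$ and terminates; action $1$ pays $0$ and moves to $u$;
\item at $u$, action $0$ pays $100$ and action $1$ pays $11$.
\end{itemize}
The library holds $\pi^C$, playing $(1,0,0)$ at $(s_0,y,u)$, and $\pi^E$, playing $(1,1,1)$. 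Their values are $(10,10,100)$ and $(11,11,11)$ respectively.

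The independent rule (here plain single-agent GPI) compares $12$ with $\max(10,11)=11$ at $s_0$ and stops with $12$. The synchronized rule follows $\pi^E$ at $s_0$ and $y$ ($11>10$), then switches to $\pi^C$ at $u$ ($100>11$), collecting $100$. This exceeds both $\max_a g_{s_0}(a)=12$ and the joint-GPI value $12$.

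The paper's own proof has the same hole. It closes with $\max_k V^{\pi^k}(s)=V^{\pi^{\mathrm{sync}}}(s)$ citing Proposition~\ref{prop:l1safe}, but that proposition gives only $\ge$. The comparison would hold only if the synchronized rule committed to one entry at the start state, which \eqref{eq:l1} does not do.

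\textbf{Two smaller points on the part that does hold.} First, your Step 1 induction needs each continuation $z^k_i$ to depend on $s$ only through $s_i$. If it reads teammates' local states, which are driven by teammates' actions, then $\psi^k_i$ is stale again despite factored transitions. So this is an added assumption, not an observation about \eqref{eq:l2}; the paper's proof makes it silently too.

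Second, given that locality, the safety bound needs neither supermodularity nor Proposition~\ref{prop:reqsuffice}. Each agent executes exact single-agent GPI on its own MDP, so Proposition~\ref{prop:gpi} gives $V_i^{\pi^{\mathrm{ind}}}\ge\max_k V_i^{z^k_i}$, where $V_i^{z^k_i}$ is agent $i$'s local value under $z^k_i$. Summing over agents gives $V^{\pi^{\mathrm{ind}}}\ge\sum_i\max_k V_i^{z^k_i}\ge\max_k V^{\pi^k}$. This route is also sturdier than the alignment step you borrow. The per-agent rule maximizes $\sum_i\max_k\psi^k_i(s,a_i)^\top w_i$, not $g_s=\max_k\sum_i\psi^k_i(s,a_i)^\top w_i$, and the two argmax sets need not coincide even when $g_s$ is supermodular.
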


Factorization neutralizes the transition channel only in the decomposed-reward case, where the per-agent values are exact marginal processes of the agent's own dynamics. When the reward features couple the agents, the cross-feature components are measured against the library's teammate behavior and recomposition invalidates them: even with factored transitions and per-state supermodularity, the independent rule can violate the safety bound under the exact semantics of Eq. \eqref{eq:l2}. 

\subsection{Beyond the free region: when no fixed rule suffices}
\label{sec:beyond}

The two fixed rules cover complementary but incomplete parts of the problem, and a large class of tasks falls outside both. Synchronized composition is always safe but cannot serve heterogeneous objectives: it replays one library entry as a joint action, so its reachable behavior is exactly the $K$ entries, and a test objective that asks different agents to pursue different components of the library has no synchronized representative. Independent composition can express those heterogeneous choices, but only inside the free region of Propositions~\ref{prop:l2supermod} and \ref{prop:l2factored}, and both free conditions can fail. Transition coupling breaks Requirement~\ref{req:valid} outright: the counterexample of Lemma~\ref{lem:l2unsafe} is an exact failure in which the independent rule falls below the best library entry and below joint-GPI. Cross-feature rewards break Requirement~\ref{req:align}, so even factored transitions do not by themselves restore safety.

The two conditions are structural properties of the task, not tunable knobs. For example, a traffic network couples each intersection to its upstream neighbors, so factorization fails by construction and real coupled problems live in the failure region, where synchronized composition is too rigid to serve heterogeneous demand and independent composition is no longer safe. This is the region a learned composition rule must cover, and it is the design target of MA-USFA (Section~\ref{sec:method}), which is applied to every objective and is built to be at least as good as either fixed rule everywhere, so its use never depends on a safety test. The theory of this section instead governs the fixed-rule-only regime: when a learned composer is unavailable, synchronized composition is the unconditionally safe default, and the independent rule may replace it precisely where its two requirements hold.

\section{Learnable Method: MA-USFA}
\label{sec:method}

\subsection{Overview}

Section~\ref{sec:theory} shows that each fixed rule is safe only on part of the objective space, so a single rule that is safe and flexible on every objective must learn the composition.
MA-USFA is a hierarchy of two decision layers over the cooperative multi-agent MDP of Section~\ref{sec:multi} (Fig.~\ref{fig:arch}). The lower layer acts in that MDP with the primitive per-agent actions $a_i$: a per-agent context-conditioned USFA predicts each agent's successor features, conditioned on a compact summary of the teammates' task context, so that no agent's value assumes a fixed set of teammates. The upper layer is a selection layer stacked on top, and it is where composition happens. At each state its observation is the joint state $s$, the task vector $w$, and the lower layer's candidate valuations $\{\tilde\psi_i(s, \cdot, z_i)^\top w_i : z_i \in C_i\}$; its action is a per-agent choice of library entry $g_i \in C_i$, whose primitive action the team then executes. The upper layer thus decides which library policy each agent follows, the cross-agent correction a per-agent value cannot express, while the lower layer supplies the values it chooses among. Because the composer is initialized at the independent rule and trained with the value layer frozen, moving only in directions that raise team value, on every objective MA-USFA starts from a fixed rule and only improves on it, at least matching it everywhere and exceeding it where the correction has something to add.

\subsection{The two layers}

\paragraph{Lower layer: per-agent context-conditioned successor features.}

Each agent $i$ learns a value model
\begin{equation}
\tilde\psi_i(s, a_i \mid z_i, w_{-i}),
\label{eq:lowersf}
\end{equation}
where $z_i$ is the agent's own policy encoding and $w_{-i} = (w_j)_{j \neq i}$ collects the teammates' task weights. The two arguments play the two roles of USFA~\citep{borsa2019universal}: $z_i$ is the policy axis, a learned index over the agent's library entries, and the weights carry the task axis, linear in the value and applied only at test time. The conditioning on $w_{-i}$ is the new ingredient, the architectural answer to Requirement~\ref{req:valid}: each teammate's weight indexes a cluster of teammate policies, so agent $i$ learns expected successor features over the teammate behavior distribution rather than a snapshot against one fixed teammate version; without it the value is stale the moment the library holds more than one teammate policy, the failure Lemma~\ref{lem:l2unsafe} exposes. For large teams the context is summarized (mean or learned embedding of $w_{-i}$) so the model does not scale with $N$.

Training follows the USFA protocol: the TD target of $\tilde\psi_i$ is the feature vector $\phi$, so the update does not depend on $w$, which plays the standard roles of behavior anchor, library anchor ($z \sim D_z(\cdot \mid w)$), and test-time pricing. With per-agent additive features the joint successor feature of a synchronized entry decomposes as $\psi^{\mathrm{joint},k} = \sum_i \psi^k_i$, so one network serves both the synchronized anchor and the per-agent library (Appendix~\ref{app:usfa}).

\paragraph{Upper layer: the learned composer.}

The composer is a collection of per-agent selectors $\Upsilon^\theta = (\Upsilon^\theta_1, \dots, \Upsilon^\theta_N)$. Its input is the joint state, the task vector, and the candidate valuations $\{\tilde\psi_i(s, \cdot, z_i)^\top w_i : z_i \in C_i\}$ from the lower layer; its output is a selection $g_i \in C_i$ per agent:
\begin{equation}
g_i \;=\; \Upsilon^\theta_i\!\left(s,\, w \;\middle|\; \big\{\,\tilde\psi_j(s,\cdot,z_j)^\top w_j \,\big\}_{j,\, z_j \in C_j}\right)\ \in\ C_i,
\label{eq:uppersel}
\end{equation}
It learns the cross-agent value correction: the lower layer absorbs the expected part of the teammate dependence, predictable from the public task context, and the composer adds the residual that depends on the joint combination of candidate policies, which no per-agent value can price. It does not relearn single-agent decision making, only corrects values across agents, which is why it is small and trainable on a fraction of the library's budget. Similar to the lower layer, for large teams the contexts ($w$ and $\big\{\,\tilde\psi_j(s,\cdot,z_j)^\top w_j \,\big\}_{j,\, z_j \in C_j}$) are summarized (e.g. via attention or communication mechanisms) to ensure scalability.

\begin{wrapfigure}{r}{0.25\textwidth}
\centering
\vspace{-0.6\baselineskip}
\includegraphics[width=0.25\textwidth]{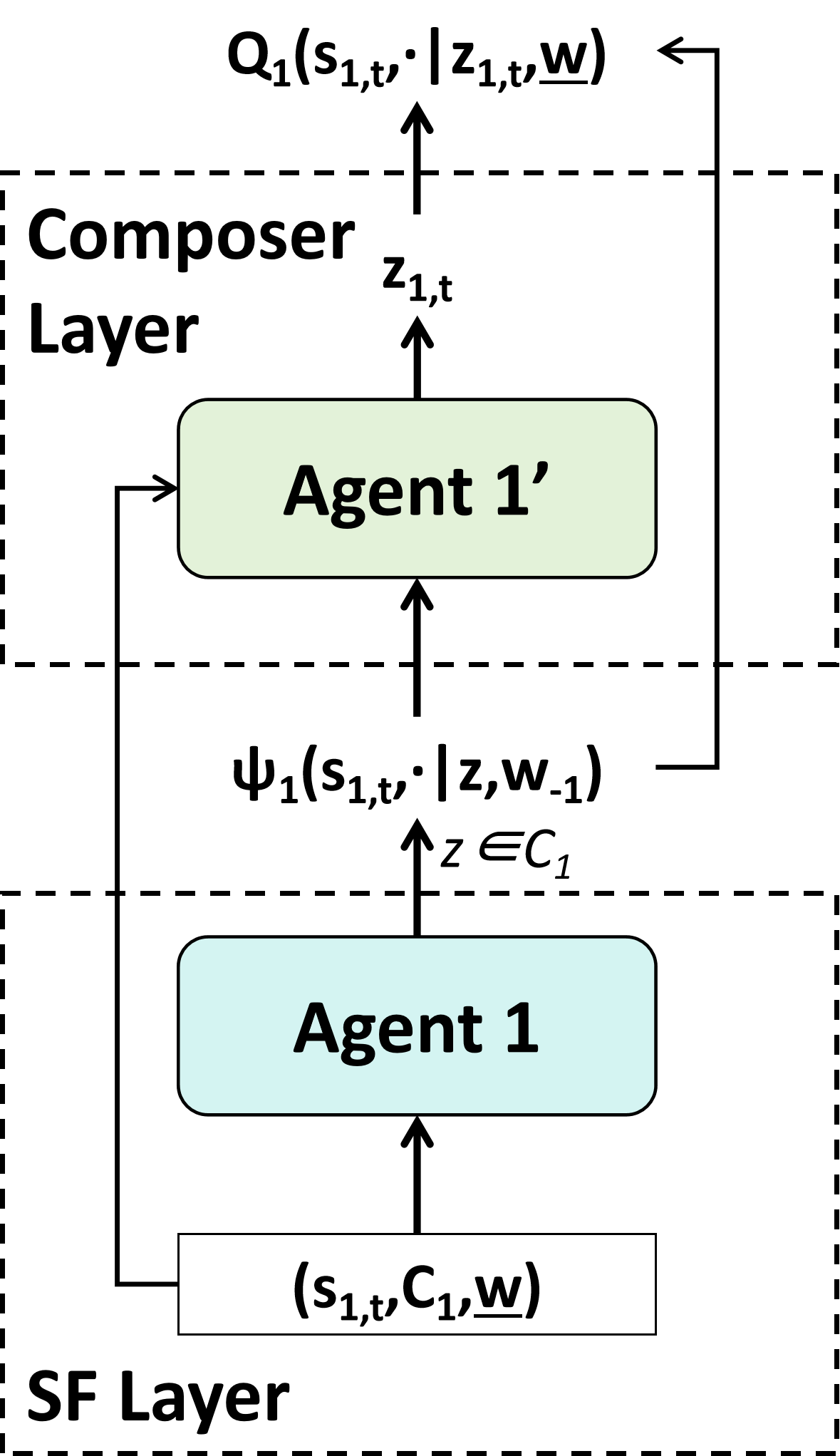}
\caption{MA-USFA shown for a single agent.}
\label{fig:arch}
\vspace{-0.8\baselineskip}
\end{wrapfigure}

\subsection{Training and deployment}

Training has two phases, one per layer (Table~\ref{tab:phases}, Appendix~\ref{app:method}; full pseudocode in Appendix~\ref{app:algo}). Phase 1 trains the lower per-agent value layer, sampling tasks at random from the homogeneous and heterogeneous scenarios with the $w_{-i}$ conditioning, and then freezes it; this layer produces the candidates the composer selects among. Phase 2 trains the upper composer, initialized at the independent transfer policy with the value layer frozen; wherever that policy is already optimal the value-increasing gradient vanishes and no parameters move, so the budget concentrates on the objectives whose coupling actually calls for a cross-agent correction, namely coupled dynamics and cross-feature rewards. This is what keeps the composer small, the difference between MA-USFA and an end-to-end composer.

Deployment is a single forward pass with no gradient updates, run on every objective: the lower layer prices the candidates by dot product and the composer selects, so any $w_{\mathrm{test}}$ is served with no per-task adaptation. This is the entire method at test time. The composer is always the deciding rule, and the fixed rules of Section~\ref{sec:theory} take no part in it. The cost of learning to compose is paid once before deployment over the distribution of anticipated objectives, after which every objective needs only forward passes, the precise sense in which the method needs zero per-task adaptation.

\section{Experiments}
\label{sec:exp}

\subsection{Experiment Setup}
In this section, we aim to answer three questions: (i) Does independent composition fail exactly where the theory says it must, while synchronized composition holds its floor (Proposition~\ref{prop:l1safe}, Lemma~\ref{lem:l2unsafe})? (ii) Does the learned composer recover per-objective retraining where the fixed rules break? (iii) And do both survive at large scale scenario, where coupling is endogenous rather than dialed in? The controlled domain below answers the first two. The third is answered by a traffic signal control task on a Manhattan grid of $28 \times 7$ intersections ($196$ signal agents), run on a homogeneous task (one shared weight) and a heterogeneous task (each intersection with its own weight). Overall, the results show MA-USFA dominates both fixed rules on every metric, with the largest margins on the heterogeneous task, and we leave it to Appendix~\ref{app:tsc} due to page limitation.

\paragraph{Methods and baselines.} We compare five methods on every objective. The three composition rules of Section~\ref{sec:prelim} are the objects of study: synchronized composition, independent composition, and MA-USFA. Two methods bracket them: joint-GPI, centralized generalized policy improvement over the joint action space (the prior-work baseline, feasible only for small teams), and per-task retraining, trained from scratch on the deployment reward including the out-of-basis collision penalty (the oracle ceiling MA-USFA aims to match).

\subsection{Controlled domain: SFWorld}

SFWorld is a multi-agent grid world that exposes the two coupling channels of Section~\ref{sec:theory} as controllable parameters. $N$ agents move on a $5 \times 5$ grid with five actions over $40$ steps ($\gamma = 0.95$); features $\phi_i(s')$ are smooth bumps over $K$ resource cells, and the reward $r_i = \phi_i(s')^\top w_i - 2.0 \cdot \mathrm{blocked}_i$ adds a collision penalty outside the feature basis that any closed-form pricing $\psi \cdot w$ is structurally blind to. The coupling parameter $\kappa$ is the probability that two agents targeting the same cell collide: at $\kappa = 0$ the dynamics factorize (Proposition~\ref{prop:l2factored}); as $\kappa$ grows, each agent's kernel depends on the teammates' actions. In $A_{\mathrm{distinct}}$ the optimal regions are disjoint so coupling never binds; in $B_{\mathrm{overlap}}$ the agents contend for a shared region so collisions arise whenever the joint policy hybridizes. More details are provided at Appendix~\ref{app:toy}.

\begin{table}[htbp]
\centering
\caption{Controlled domain, $N = 2$, four-entry corner library. Team return over $40$ steps, mean $\pm$ standard deviation over three evaluation seeds; $\mathrm{coll_{ind}}$ is the collision rate of the independent rule. Per row, \textbf{best} is bold and \underline{second best} underlined (used throughout the paper). }
\label{tab:mainbody}
\resizebox{\textwidth}{!}{%
\begin{tabular}{@{}llrrrrrr@{}}
\toprule
task & $\kappa$ & Sync & Indep & joint-GPI & MA-USFA & retrain & $\mathrm{coll_{ind}}$ \\
\midrule
$A_{\mathrm{distinct}}$ & 0.00 & $23.42{\scriptstyle\pm0.34}$ & $\mathbf{38.73}{\scriptstyle\pm0.47}$ & $23.74{\scriptstyle\pm0.29}$ & $38.40{\scriptstyle\pm0.53}$ & $\underline{38.69}{\scriptstyle\pm0.48}$ & 0.000 \\
$A_{\mathrm{distinct}}$ & 0.50 & $23.37{\scriptstyle\pm0.16}$ & $\underline{38.23}{\scriptstyle\pm0.01}$ & $23.63{\scriptstyle\pm0.14}$ & $38.17{\scriptstyle\pm0.22}$ & $\mathbf{38.35}{\scriptstyle\pm0.10}$ & 0.000 \\
$A_{\mathrm{distinct}}$ & 1.00 & $23.44{\scriptstyle\pm0.15}$ & $\underline{38.31}{\scriptstyle\pm0.27}$ & $23.80{\scriptstyle\pm0.23}$ & $38.25{\scriptstyle\pm0.18}$ & $\mathbf{38.48}{\scriptstyle\pm0.16}$ & 0.000 \\
\addlinespace
$B_{\mathrm{overlap}}$ & 0.00 & $31.52{\scriptstyle\pm0.26}$ & $\mathbf{38.77}{\scriptstyle\pm0.14}$ & $32.59{\scriptstyle\pm0.20}$ & $\underline{38.44}{\scriptstyle\pm0.13}$ & $37.56{\scriptstyle\pm0.25}$ & 0.000 \\
$B_{\mathrm{overlap}}$ & 0.25 & $31.36{\scriptstyle\pm0.34}$ & $29.96{\scriptstyle\pm0.32}$ & $\mathbf{31.71}{\scriptstyle\pm0.24}$ & $\underline{31.52}{\scriptstyle\pm0.16}$ & $30.56{\scriptstyle\pm0.49}$ & 0.125 \\
$B_{\mathrm{overlap}}$ & 0.50 & $\mathbf{31.27}{\scriptstyle\pm0.09}$ & $21.04{\scriptstyle\pm0.94}$ & $30.65{\scriptstyle\pm0.14}$ & $30.94{\scriptstyle\pm0.30}$ & $\underline{31.08}{\scriptstyle\pm0.25}$ & 0.225 \\
$B_{\mathrm{overlap}}$ & 0.75 & $\mathbf{31.15}{\scriptstyle\pm0.19}$ & $13.67{\scriptstyle\pm0.30}$ & $29.38{\scriptstyle\pm0.29}$ & $\underline{30.71}{\scriptstyle\pm0.25}$ & $30.67{\scriptstyle\pm0.10}$ & 0.318 \\
$B_{\mathrm{overlap}}$ & 1.00 & $\mathbf{31.24}{\scriptstyle\pm0.16}$ & $7.07{\scriptstyle\pm0.20}$ & $28.62{\scriptstyle\pm0.60}$ & $\underline{31.18}{\scriptstyle\pm0.18}$ & $30.33{\scriptstyle\pm0.31}$ & 0.369 \\
\midrule
\multicolumn{2}{@{}l}{Average (all tasks)} & 28.35 & 28.22 & 28.02 & $\mathbf{34.70}$ & $\underline{34.46}$ & 0.130 \\
\bottomrule
\end{tabular}}
\end{table}

Table~\ref{tab:mainbody} carries the whole controlled-domain story: its $B_{\mathrm{overlap}}$ block exhibits the four claims of the theory at once, and $A_{\mathrm{distinct}}$ isolates coverage from coupling. Down $B_{\mathrm{overlap}}$ the synchronized rule stays flat as the coupling climbs (Proposition~\ref{prop:l1safe}), since synchronized switching never changes anyone's environment, whereas the independent rule falls monotonically as its collision rate rises (Lemma~\ref{lem:l2unsafe}), pairing per-agent choices into hybrid joint policies whose collisions the harvest-only values cannot price. The two orderings cross: the independent rule wins in the free region but the order flips once coupling binds. MA-USFA instead tracks the oracle ceiling at every coupling level; joint-GPI degrades only mildly but never reaches retraining and is unavailable beyond small teams. In $A_{\mathrm{distinct}}$, where collisions never occur, independent composition is free as Proposition~\ref{prop:l2factored} predicts. Averaged over all eight tasks MA-USFA is strongest at $34.70$, ahead of retraining ($34.46$) and far above every fixed rule, with team-size and library-coverage sweeps in Appendix~\ref{app:toy} confirming the pattern.

\section{Conclusion}

In this paper, we study transfer learning in cooperative multi-agent reinforcement learning, covering both heterogeneous and homogeneous tasks. We first identify that independent per-agent composition (the approach adopted by most prior work) does not inherit the single-agent safety guarantee. In contrast, the only unconditionally safe fixed rule is synchronized composition, albeit at the cost of coverage. To balance safety and flexibility, we propose MA-USFA, a hierarchical solution that combines a frozen per-agent value layer with a learned cross-agent correction. Trained once over the preference distribution and deployed with zero per-task adaptation, MA-USFA attains both the safety of synchronized composition and the flexibility of independent recombination. Evaluated on a controlled grid-world scenario and a real-world large-scale traffic signal control task, MA-USFA matches or exceeds every fixed composition rule and recovers the performance of per-task retraining.

\section*{Ethics Statement}
This work adheres to the principles outlined in the ICLR Code of Ethics.

\bibliography{refs}
\bibliographystyle{iclr2026_conference}

\appendix
\newpage
\section*{Appendix Contents}  
\startcontents  
\printcontents{}{1}{\setcounter{tocdepth}{2}}  
\newpage

\input{appendices}

\end{document}

%% file: appendices.tex

\section{Related Works}
\label{app:related}

\paragraph{Single-agent transfer.}
Successor features decouple the dynamics of a policy from the objective it serves: a single successor feature model prices any policy under any objective in a linear family by a dot product~\citep{barreto2017successor}. Generalized policy improvement turns this pricing rule into a composition rule with a guarantee: the greedy recombination of a library of policies is never worse than any policy in the library~\citep{barreto2020fast}. Universal value function approximators condition the value model on the goal directly and interpolate across goals~\citep{schaul2015universal}. Universal successor feature approximators merge the two: a model conditioned on a policy encoding and a task weight, trained once over a distribution of objectives, so that composition at test time is a closed-form dot product over a candidate set~\citep{borsa2019universal}. Optimistic linear support and successor features extend the same machinery to sequential policy transfer across a set of objectives~\citep{alegre2022optimistic}. The present paper takes the operating mode of this lineage, train once and compose with no per-task adaptation, and asks what survives in a team; the answer is that the guarantee does not survive independent per-agent execution.

\paragraph{Multi-agent transfer.}
The survey of da Silva and Costa~\citep{dasilva2019survey} organizes multi-agent transfer along what is transferred, to whom, and with what mechanism, and documents that most proposals carry single-agent recipes into teams without re-examining their premises. The works closest to ours follow exactly the template we analyze: \citet{dealmeida2024knowledge} transfer knowledge across multi-objective multi-agent tasks by letting each agent keep its own library and apply generalized policy improvement independently, and do not establish a multi-agent improvement guarantee; \citet{liu2022efficient} transfer successor features per agent to accelerate exploration, again per-agent; \citet{nigam2026zeroshot} study zero-shot coordination in ad hoc teams with generalized policy improvement and difference rewards, and explicitly note that the single-agent improvement guarantee is not established for teams, the gap this paper fills with a precise negative result and a positive one. None of these works characterizes when the per-agent recipe is safe and when it fails; our Lemma~\ref{lem:l2unsafe} and Propositions~\ref{prop:l2supermod} and \ref{prop:l2factored} draw that boundary.

\paragraph{Value decomposition.}
The cooperative deep multi-agent literature aligns per-agent and joint value models at training time. VDN sums the per-agent values~\citep{sunehag2018value}; QMIX and Weighted QMIX mix them through monotone networks and characterize the family of joint value functions representable under monotonicity~\citep{rashid2018qmix,rashid2020weighted}; QTRAN formalizes the individual-global-max (IGM) condition under which per-agent greedy execution equals joint greedy execution~\citep{son2019qtran}. These works guarantee alignment by construction during training, at the price of a fixed task. Our Requirement~\ref{req:align} imports the IGM condition to composition time, where the values are no longer trained together, and our Lemma~\ref{lem:l2unsafe} shows that even when the condition holds, composition can still be unsafe because the per-agent values are stale, the failure that Requirement~\ref{req:valid} names and that no training-time credit assignment mechanism addresses.


\section{Single-Agent Foundations}
\label{app:single}

This appendix recalls the single-agent facts that Section~\ref{sec:single} uses and that the multi-agent proofs of Appendix~\ref{app:proofs} reduce to. Throughout, features are $\phi(s, a) \in \mathbb{R}^d$, a reward is a linear readout $r_w(s, a) = \phi(s, a)^\top w$, and the successor features of a policy $\pi$ are the discounted feature stream $\psi^\pi(s, a) = \mathbb{E}^\pi\!\left[\sum_{t \geq 0} \gamma^t \phi(s_t, a_t) \mid s_0 = s, a_0 = a\right]$.

\paragraph{The pricing identity.}
Because expectation is linear and the reward is linear in the features, the value of $\pi$ under any objective $w$ is a dot product against a single, objective-independent successor feature model:
\begin{equation}
V^\pi_w(s, a) \;=\; \mathbb{E}^\pi\!\Big[\textstyle\sum_{t \geq 0} \gamma^t\, \phi(s_t, a_t)^\top w \,\Big|\, s_0 = s, a_0 = a\Big] \;=\; \psi^\pi(s, a)^\top w .
\label{eq:pricing}
\end{equation}
The identity is what lets one model, learned once, price a policy under every objective in the linear family without re-estimating a value function per objective~\citep{barreto2017successor}; it is the single-agent form of the team identity \eqref{eq:teamval}, and every appearance of $\psi^\top w$ in the main text is an instance of it.

\paragraph{The generalized policy improvement guarantee.}
Given a library $\{\pi^k\}_{k=1}^K$ with successor features $\{\psi^k\}$ and a test objective $w$, the GPI policy acts greedily with respect to the best library value at each state, $\pi^{\mathrm{gpi}}(s) \in \arg\max_a \max_k \psi^k(s, a)^\top w$.

\begin{proposition}[GPI improvement, \citealp{barreto2017successor,barreto2020fast}]
\label{prop:gpi}
For every state $s$ and every library entry $k$, $V^{\pi^{\mathrm{gpi}}}_w(s) \geq V^{\pi^k}_w(s)$.
\end{proposition}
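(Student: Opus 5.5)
The plan is the classical policy-improvement argument of Barreto et al., written through the Bellman operator of the greedy policy. Let $Q^k(s,a) = \psi^k(s,a)^\top w$, which by the pricing identity \eqref{eq:pricing} is the true action-value of $\pi^k$ under $w$, and let $Q^{\max}(s,a) = \max_k Q^k(s,a)$. Let $T^{\mathrm{gpi}}$ be the Bellman evaluation operator of $\pi^{\mathrm{gpi}}$:
\[
(T^{\mathrm{gpi}} Q)(s,a) = r_w(s,a) + \gamma\, \mathbb{E}_{s' \sim P(\cdot\mid s,a)}\big[Q\big(s', \pi^{\mathrm{gpi}}(s')\big)\big].
\]
The goal is to show $Q^{\pi^{\mathrm{gpi}}} \geq Q^{\max}$ pointwise. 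The value statement then follows: $V^{\pi^{\mathrm{gpi}}}_w(s) = Q^{\pi^{\mathrm{gpi}}}(s, \pi^{\mathrm{gpi}}(s)) \geq Q^{\max}(s,\pi^{\mathrm{gpi}}(s)) = \max_a Q^{\max}(s,a) \geq Q^k(s,\pi^k(s)) = V^{\pi^k}_w(s)$.

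The first step is the one-step inequality $T^{\mathrm{gpi}} Q^{\max} \geq Q^{\max}$. Fix $k$ and $(s,a)$. Since $\pi^{\mathrm{gpi}}(s')$ maximizes $Q^{\max}(s',\cdot)$,
\[
Q^{\max}\big(s',\pi^{\mathrm{gpi}}(s')\big) = \max_{a'} Q^{\max}(s',a') \geq Q^k\big(s',\pi^k(s')\big).
\]
Taking expectations and using the Bellman equation for $\pi^k$,
\[
(T^{\mathrm{gpi}} Q^{\max})(s,a) \geq r_w(s,a) + \gamma\,\mathbb{E}\big[Q^k(s',\pi^k(s'))\big] = Q^k(s,a).
\]
Maximizing over $k$ gives the inequality.

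The second step iterates it. $T^{\mathrm{gpi}}$ is monotone, since $Q \leq Q'$ implies $T^{\mathrm{gpi}}Q \leq T^{\mathrm{gpi}}Q'$, so $Q^{\max} \leq T^{\mathrm{gpi}} Q^{\max} \leq (T^{\mathrm{gpi}})^2 Q^{\max} \leq \cdots$. Because $T^{\mathrm{gpi}}$ is a $\gamma$-contraction in sup norm with unique fixed point $Q^{\pi^{\mathrm{gpi}}}$, the iterates converge to $Q^{\pi^{\mathrm{gpi}}}$. Hence $Q^{\pi^{\mathrm{gpi}}} \geq Q^{\max}$, and the conclusion follows as above.

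The main obstacle is not conceptual but bookkeeping. The argument needs bounded features so that the contraction applies, which holds since $\phi$ is bounded under $\gamma<1$. It also needs measurable tie-breaking in the argmax, which any deterministic selection provides. Alternatively, the iteration can be replaced by an explicit telescoping: unroll the inequality $n$ steps along $\pi^{\mathrm{gpi}}$ trajectories and let $n\to\infty$, so the remainder term $\gamma^n\|Q^{\max}\|_\infty$ vanishes. I would present the telescoping version, since the multi-agent proofs in Appendix~\ref{app:proofs}, such as Proposition~\ref{prop:l1safe}, reuse exactly that unrolling.
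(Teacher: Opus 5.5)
Your proof is correct and uses the same template as the paper's: a one-step improvement inequality for $T^{\mathrm{gpi}}$, then monotone iteration to its fixed point via the $\gamma$-contraction. The one difference is where the iteration starts, and there your choice is the sound one.

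The paper starts from a single entry and asserts $V^{\pi^k}_w \leq T^{\mathrm{gpi}} V^{\pi^k}_w$. But $(T^{\mathrm{gpi}} V^{\pi^k}_w)(s) = Q^k(s, \pi^{\mathrm{gpi}}(s))$, whereas the one-step fact only gives $\max_j Q^j(s, \pi^{\mathrm{gpi}}(s)) \geq Q^k(s, \pi^k(s))$. The two differ whenever the GPI action is selected because of an entry $j \neq k$. For instance, suppose both entries play $b$ at $s$, with $Q^1(s,\cdot) = (10, 0)$ and $Q^2(s,\cdot) = (0, 5)$ on actions $(a, b)$; this is realizable in a two-stage MDP where the entries differ only after $s$. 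Then GPI plays $a$ and $(T^{\mathrm{gpi}} V^{\pi^2}_w)(s) = 0 < 5 = V^{\pi^2}_w(s)$. So the paper's chain breaks at its first link, even though the conclusion still holds.

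Starting from $Q^{\max}$, as you do, or from $\max_k V^{\pi^k}_w$ at the state-value level, repairs exactly this. The maximum over entries absorbs whichever $j$ justified the greedy action. Your argument is the standard Barreto et al.\ proof and is the version to keep.
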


\begin{proof}
Let $Q(s, a) = \max_k \psi^k(s, a)^\top w$. For any $k$, the greedy action satisfies $\max_a Q(s, a) \geq Q(s, \pi^k(s)) \geq \psi^k(s, \pi^k(s))^\top w = V^{\pi^k}_w(s)$, so $\pi^{\mathrm{gpi}}$ has one-step advantage at least that of $\pi^k$ at every state. The advantage telescopes: writing $T^{\mathrm{gpi}}$ for the Bellman operator of $\pi^{\mathrm{gpi}}$, $V^{\pi^k}_w \leq T^{\mathrm{gpi}} V^{\pi^k}_w \leq (T^{\mathrm{gpi}})^n V^{\pi^k}_w \to V^{\pi^{\mathrm{gpi}}}_w$ by monotonicity and the $\gamma$-contraction of $T^{\mathrm{gpi}}$. Proposition~\ref{prop:l1safe} in the main text is the synchronized-composition specialization of this argument to a team that switches together.
\end{proof}

\paragraph{Why UVFA and USFA generalize across objectives.}
Universal value function approximators condition the value directly on the goal, $V(s, g)$, and share parameters across goals, so that a value learned for one goal transfers by function approximation to nearby goals rather than being retrained from scratch~\citep{schaul2015universal}. Universal successor feature approximators keep the pricing identity \eqref{eq:pricing} but make the successor feature model itself universal, $\tilde\psi(s, a, z, w)$, with a policy axis $z$ that indexes which library policy is being priced and a task axis $w$ that shapes the training behavior distribution and prices the result at test time~\citep{borsa2019universal}. The motivation is precisely the operating mode this paper needs: separating the policy axis from the task axis lets a single network be trained once over a distribution of objectives and then, at deployment, evaluate a dot product over a candidate set with no per-task adaptation, because the objective enters only through the linear factor $w$ and never through retraining. MA-USFA is the multi-agent counterpart: it keeps this policy-task factorization per agent and adds a teammate-context axis, so that each agent's successor features remain valid as the teammates' objectives change; Appendix~\ref{app:usfa} states the reuse precisely.

\section{Proofs}
\label{app:proofs}

\subsection{Proof of Proposition~\ref{prop:l1safe}}

Fix the library $\Pi$, the test objective $w_{\mathrm{test}}$, and an initial state $s_0$. Write $v^k(s) = V_{w_{\mathrm{test}}}^{\pi^k}(s)$ for the value of entry $k$ under the test objective, and let $k^*(s) \in \arg\max_k v^k(s)$. The synchronized rule \eqref{eq:l1} plays $\pi^{k^*(s)}(s)$ at state $s$, provided the score $\psi^k(s, \pi^k(s))^\top w_{\mathrm{test}}$ equals the true value $v^k(s)$; this is the validity premise of the proposition, and it holds exactly under the linear feature model. We prove $V_{w_{\mathrm{test}}}^{\pi^{\mathrm{sync}}}(s) \geq v^k(s)$ for every state $s$ and every entry $k$.

For a finite horizon $T$, let $V_t^{\pi}(s)$ denote the value of policy $\pi$ with $t$ steps remaining, and $V_t^{\pi^k}(s)$ likewise. We show by induction on $t$ that $V_t^{\pi^{\mathrm{sync}}}(s) \geq V_t^{\pi^k}(s)$ for every $s$ and $k$. The base case $t = 0$ is trivial. For the step, at state $s$ the rule plays $a^{k^*} = \pi^{k^*(s)}(s)$, and
\begin{align}
V_{t+1}^{\pi^{\mathrm{sync}}}(s)
&= \E\Big[\phi(s, a^{k^*})^\top w_{\mathrm{test}} + \gamma\, V_t^{\pi^{\mathrm{sync}}}(s') \Big] \\
&\geq \E\Big[\phi(s, a^{k^*})^\top w_{\mathrm{test}} + \gamma\, V_t^{\pi^{k^*}}(s') \Big]
= V_{t+1}^{\pi^{k^*}}(s),
\end{align}
where the inequality applies the induction hypothesis at the successor state $s'$, and the equality uses that $\psi^{k^*}(s, a^{k^*})^\top w_{\mathrm{test}} = V^{\pi^{k^*}}(s)$ by validity. Since $k^*(s)$ maximizes $v^k(s)$ over the library, $V_{t+1}^{\pi^{k^*}}(s) = \max_k V_{t+1}^{\pi^k}(s) \geq V_{t+1}^{\pi^k}(s)$, which closes the induction. Taking $T \to \infty$ with $\gamma < 1$ gives the infinite-horizon statement by standard monotone convergence (the finite-horizon values are increasing in $T$ and bounded by the discounted feature sums). No assumption on the reward structure beyond validity, and none on the dynamics, is used: the argument never requires the transition kernel to factor or the features to decompose.

\subsection{Proof of Proposition~\ref{prop:reqsuffice}}

We show that Requirements~\ref{req:align} and \ref{req:valid} together imply the safety bound $V_{w_{\mathrm{test}}}^{\pi^{\mathrm{ind}}}(s) \geq \max_k V_{w_{\mathrm{test}}}^{\pi^k}(s)$ for every state. Consider the joint-GPI value of the library, $g_s(a) = \max_k \psi^k(s, a)^\top w_{\mathrm{test}}$, whose greedy policy $\pi^{\mathrm{gpi}}(s) \in \arg\max_a g_s(a)$ satisfies the single-agent guarantee $V_{w_{\mathrm{test}}}^{\pi^{\mathrm{gpi}}}(s) \geq \max_k V_{w_{\mathrm{test}}}^{\pi^k}(s)$ (Proposition~\ref{prop:gpi} of Appendix~\ref{app:single}, applied to the joint action space). It suffices to show that under the two requirements the independently composed policy $\pi^{\mathrm{ind}}$ of \eqref{eq:l2} selects a joint action in $\arg\max_a g_s(a)$ at every state and is scored by valid values.

Requirement~\ref{req:valid} states that each stored $\psi^k_i$ is the true successor feature of the composed joint policy, so every per-agent score $\psi^k_i(s, a_i)^\top w_i$ equals the corresponding component of the true joint value, and the joint score $\sum_i \psi^k_i(s, a_i)^\top w_i$ equals $\psi^k(s, a)^\top w_{\mathrm{test}}$; the values driving \eqref{eq:l2} are therefore the same values that define $g_s$, with no staleness. Requirement~\ref{req:align} states that the joint maximizer of $g_s$ is reachable by per-agent greedy choices, $A^{*}(s) = \prod_i A^{*}_i(s)$, so the coordinate-wise maximizers selected by \eqref{eq:l2} form a joint action in $\arg\max_a g_s(a)$ (a consistent tie-break resolves ties within the product set). Hence $\pi^{\mathrm{ind}}$ coincides with a joint-GPI policy scored by valid values, and the guarantee of Proposition~\ref{prop:gpi} transfers to it. The two requirements are exactly the two premises the single-agent argument needs once it is executed by $N$ decentralized maximizers rather than one: validity restores the fixed-dynamics premise at composition time, and alignment restores the single-decision-maker premise.

\subsection{The counterexample of Lemma~\ref{lem:l2unsafe}: full construction}

\paragraph{Task.}
Two agents, two stages, deterministic dynamics. Each agent has its own two-dimensional feature stream, and its reward depends only on its own action, so the team reward is fully separable: $r_i(s, a_i) = \phi_i(s, a_i)^\top w$ with a shared weight $w = (9, 8)$. The two coupling between the agents lives entirely in the transition: the second-stage state is $T(a_1, a_2)$ with $T(0,0) = T(1,1) = 0$ and $T(0,1) = T(1,0) = 1$. The feature values are collected in Table~\ref{tab:l2feat}.

\begin{table}[htbp]
\centering
\caption{Feature values of the Lemma~\ref{lem:l2unsafe} construction. Agent $i$'s reward at the first stage is $\phi_i(a_i)^\top w$, and at the second stage $\phi_i(s_1, a_i)^\top w$; only the transition depends on both agents' actions.}
\label{tab:l2feat}
\small
\begin{tabular}{llll}
\toprule
stage & agent 1 & agent 2 \\
\midrule
$s_0$, action $0$ & $(1,3)$ & $(0,7)$ \\
$s_0$, action $1$ & $(0,6)$ & $(6,5)$ \\
$s_1 = 0$, action $0$ & $(4,0)$ & $(7,7)$ \\
$s_1 = 0$, action $1$ & $(4,3)$ & $(4,6)$ \\
$s_1 = 1$, action $0$ & $(5,7)$ & $(3,4)$ \\
$s_1 = 1$, action $1$ & $(3,0)$ & $(2,2)$ \\
\bottomrule
\end{tabular}
\end{table}

\paragraph{Library.}
The library holds two entries, both suboptimal under $w$ (the joint optimum is $321$, computed below). Entry $\pi^0$ plays $(0,1)$ at $s_0$ and $(0,0)$ at the second stage in both states; entry $\pi^1$ plays $(0,1)$ at $s_0$ and $(0,1)$ or $(1,0)$ at the second stage. Their values are
\begin{align}
V^{\pi^0}(s_0) &= \big[(1,3) + (6,5) + (5,7) + (3,4)\big] \cdot w = (15,19) \cdot (9,8) = 287, \\
V^{\pi^1}(s_0) &= \big[(1,3) + (6,5) + (3,0) + (3,4)\big] \cdot w = (13,12) \cdot (9,8) = 213,
\end{align}
so the best library entry is worth $287$; the second-stage terms follow from $T(0,1) = 1$ and the entries' second-stage actions. Every library entry is strictly suboptimal: $287 < 321$.

\paragraph{The independent composition.}
For each agent we compute the per-agent snapshot value of each action, the expected feature sum when the agent takes the action while the teammate follows the entry and the team continues along the entry. At $s_0$, agent 1's values are $134$ for $a_1 = 0$ (achieved through entry $\pi^0$: teammate plays $1$, the team transitions to $s_1 = 1$ along $T(0,1)$, and agent 1 plays $0$ there, so $[(1,3) + (5,7)] \cdot w = 134$) and $84$ for $a_1 = 1$; agent 2's values are $175$ for $a_2 = 0$ (through $\pi^0$: teammate plays $0$, the team transitions to $s_1 = 0$ along $T(0,0)$, and agent 2 plays $0$ there, so $[(0,7) + (7,7)] \cdot w = 175$) and $153$ for $a_2 = 1$. Neither agent faces a tie. The independent rule therefore plays $(0,0)$ at $s_0$. The transition sends the team to $s_1 = 0$, where the per-agent values are the immediate feature rewards alone, $(36, 60)$ for agent 1 and $(119, 84)$ for agent 2, and the rule plays $(1,0)$. The composed trajectory delivers
\begin{equation}
\big[(1,3) + (0,7) + (4,3) + (7,7)\big] \cdot w = (12,20) \cdot (9,8) = 268 < 287 = \max_k V^{\pi^k}(s_0).
\end{equation}
The delivered per-agent shares are $93$ for agent 1 (rated $134$) and $175$ for agent 2 (rated $175$).

\paragraph{Mechanism.}
Agent 1's choice was justified by a value of $134$, the features it would accrue if agent 2 kept the library's action at $s_0$ and the team continued down the branch $s_1 = 1$ along entry $\pi^0$. Agent 2's independent choice flips the transition to $s_1 = 0$, and agent 1 delivers $93$ instead: its snapshot value described a teammate that is not the one the composition pairs it with. Agent 2's rating happened to be accurate, because agent 1's actual choice matches what the rating assumed; a single stale rating suffices to drag the team $19$ points below its own best library entry. Both agents acted exactly as their values told them to: the failure is not a selection error, it is a validity failure (Requirement~\ref{req:valid}). The centralized joint-GPI rule over the same library selects $(1,1)$ at $s_0$ (rated $297$ through entry $\pi^0$'s continuation) and $(1,0)$ at $s_1 = 0$, and delivers $[(0,6) + (6,5) + (4,3) + (7,7)] \cdot w = (17,21) \cdot (9,8) = 321$, the joint optimum: centralization protects the values, because the joint successor features remain valid descriptions of the joint policies they were measured for. The failure is therefore specific to decentralization.

\paragraph{Alignment holds.}
The joint actions optimal for the test objective are a singleton at every state: at $s_0$ the unique optimal joint action is $(1,1)$ (value $321$); at $s_1 = 0$ it is $(1,0)$ (value $179$, against $155$, $120$ and $144$ for the other three actions); at $s_1 = 1$ it is $(0,0)$ (value $160$, against $135$, $86$ and $61$). Singletons are trivially product sets, so the alignment condition of Requirement~\ref{req:align} holds at every state: no alignment story can explain the failure. What fails is Requirement~\ref{req:valid}: the per-agent values are snapshot quantities, measured against a specific version of the teammates, and recomposition invalidates them.

\paragraph{A companion instance: staleness without violation.}
The same mechanism need not always produce a violation; it always voids the guarantee. In a two-stage companion construction with joint features, weight $w = (0.6, 0.4)$, and a three-entry library whose best entry is worth $4.2$ (the other two worth $3.0$ and $3.2$), the alignment condition again holds at the test objective, yet the independent rule delivers $6.2$: above every library entry, so the safety inequality of Definition~\ref{def:safety} holds, but below the joint optimum $6.6$, which the centralized joint-GPI rule attains. The values that drove the composition were still wrong: agent 2's choice was justified by a snapshot value of $6.6$, the value it would have if agent 1 followed the third entry, and the composition delivers $6.2$ because agent 1 does not. The composition is safe by the letter of the definition and invalid by the mechanism that produces it.

\subsection{Proof of Proposition~\ref{prop:l2supermod}}

We prove the alignment claim that carries the proposition, then the value comparison. Throughout, $g_s(a) = \max_k \psi^k(s, a)^\top w_{\mathrm{test}}$ is the joint GPI value at state $s$.

\paragraph{Topkis lemma.} If $g_s$ is supermodular on the lattice $\{0,1\}^N$, its argmax set is a sublattice. For $a, b \in \arg\max g_s$, supermodularity gives $g_s(a \vee b) + g_s(a \wedge b) \geq g_s(a) + g_s(b) = 2g_s^*$, where $g_s^*$ is the maximum; since neither term can exceed $g_s^*$, both must equal it, so $a \vee b$ and $a \wedge b$ are also argmaxes, which is the defining property of a sublattice~\citep{topkis1998supermodularity}. The same conclusion holds when $g_s$ is supermodular on a rectangular sublattice containing all of its argmax, by restricting the argument to that sublattice.

\paragraph{Alignment.} The per-agent rule maximizes, per agent, the marginal value $h_i(a_i) = \max_k \psi^k_i(s, a_i)^\top w_i$. When the joint value decomposes over agents, $\psi^k(s, a)^\top w = \sum_i \psi^k_i(s, a_i)^\top w_i$, which holds under per-agent additive features, the per-agent greedy choices are the coordinate-wise maximizers of the decomposed library, and two facts combine: the argmax of $g_s$ is a sublattice by the lemma, and per-agent greedy execution with a consistent tie-break rule (the same resolution criterion applied by every agent, for example a shared index over entries) selects a vector in that sublattice. The composed policy then executes the joint GPI rule at every state, so $V^{\pi^{\mathrm{ind}}}(s) \geq \max_k V^{\pi^k}(s)$ by the standard generalized policy improvement argument (the same telescoping as in Appendix~\ref{app:proofs} above), and $\max_k V^{\pi^k}(s) = V^{\pi^{\mathrm{sync}}}(s)$ by Proposition~\ref{prop:l1safe}. This gives $V^{\pi^{\mathrm{ind}}}(s) \geq V^{\pi^{\mathrm{sync}}}(s)$.

In the general case without an additive decomposition, the per-agent marginals and the joint value are no longer the same objects, so the argument covers the additive (value-decomposed) regime, the setting in which the per-agent greedy choices realize the joint maximizer.

\subsection{Proof of Corollary~\ref{cor:cone}}

Let $\Delta\phi_d(a, b) = \phi_d(a \vee b) + \phi_d(a \wedge b) - \phi_d(a) - \phi_d(b)$ be the supermodularity gap of feature $d$ on the pair of joint actions $(a, b)$. Under the hypothesis, each library entry prices joint actions by the same feature-linear function up to an additive constant, $\psi^k(s, a)^\top w_{\mathrm{test}} = \phi(a)^\top w_{\mathrm{test}} + c_k(s)$ with $c_k(s)$ independent of $a$, since the per-pair differences $\psi^k(s, a)^\top w_{\mathrm{test}} - \psi^k(s, b)^\top w_{\mathrm{test}} = (\phi(a) - \phi(b))^\top w_{\mathrm{test}}$ do not depend on $k$. Taking the library maximum, $g_s(a) = \phi(a)^\top w_{\mathrm{test}} + \max_k c_k(s)$, so $g_s$ equals the feature-linear value up to a state-constant. The constant cancels in every second difference, so for every pair of joint actions
\begin{equation}
g_s(a \vee b) + g_s(a \wedge b) - g_s(a) - g_s(b) = \textstyle\sum_d w_d^{\mathrm{test}}\, \Delta\phi_d(a, b).
\end{equation}
Supermodularity of $g_s$ is nonnegativity of the left-hand side over all pairs, and membership $w_{\mathrm{test}} \in K_\phi$ is nonnegativity of the right-hand side over all pairs; these are the same inequalities, so $g_s$ is supermodular if and only if $w_{\mathrm{test}} \in K_\phi$, and Proposition~\ref{prop:l2supermod} applies exactly on the cone. The reasoning is not the loose ``maximum of supermodular functions is supermodular'' (false in general); it is the constant offset that makes the library maximum feature-linear. The sufficient condition stated in the corollary, every feature $\phi_d$ itself supermodular and $w \geq 0$, is immediate: each term $w_d\, \Delta\phi_d(a, b)$ is then nonnegative.

\subsection{Proof of Proposition~\ref{prop:l2factored}}

Assume the transition kernel factorizes per agent, $s'_i = f_i(s_i, a_i)$, and suppose first that the features also decompose per agent, so the per-agent successor feature $\psi^k_i(s, a_i)$ is the discounted feature stream of agent $i$'s own dynamics. Because the dynamics of agent $i$ never depend on the teammates, $\psi^k_i$ remains a valid description of agent $i$'s feature stream under any recomposition: whoever the other agents are, agent $i$'s marginal process is the same function of its own actions. Requirement~\ref{req:valid} therefore holds by construction, for every library and every recomposition, not only at the test objective. Requirement~\ref{req:align} is then governed by the structural condition of Proposition~\ref{prop:l2supermod}, and the value comparison follows as in that proposition: $V^{\pi^{\mathrm{ind}}}(s) \geq \max_k V^{\pi^k}(s) = V^{\pi^{\mathrm{sync}}}(s)$.

For coupled rewards, the cross-feature components of the per-agent values are measured against the library's teammate behavior, and under recomposition they are stale: the per-agent value is a marginalized quantity whose validity is no longer exact, so the guarantee need not extend beyond the decomposed-reward case. The channels are therefore not independent: factorization pays off exactly where the reward decomposes.

\section{MA-USFA: Training and Deployment}
\label{app:method}

This appendix gives the full training and deployment procedure of MA-USFA (Section~\ref{sec:method}), stated independently of any experimental domain. It collects the two-phase protocol, the pseudocode for both training and the deployment forward pass, and the two design choices that keep the composer small, and it closes by placing MA-USFA next to the two composition rules it is analyzed against.

\subsection{The two training phases}

The two phases of the MA-USFA protocol are summarized in Table~\ref{tab:phases}; the per-team-size budgets are reported with the controlled-domain setup (Appendix~\ref{app:toy}). Phase 1 trains the lower per-agent value layer $\tilde\psi_i$ of \eqref{eq:lowersf}, sampling tasks at training set (tasks from the homogeneous and heterogeneous scenarios) with the teammate-context conditioning, and freezes it; Phase 2 trains the upper-layer composer on top. Both phases build components of MA-USFA itself: the first produces the frozen value layer, and the second produces the composer stacked on top of it. The two phases correspond one-to-one with the two layers of Section~\ref{sec:method}.

\begin{table}[htbp]
\centering
\caption{The two-phase training protocol of MA-USFA. Each phase builds one layer of the method: Phase 1 the frozen per-agent value layer, Phase 2 the composer on top of it.}
\label{tab:phases}
\begin{tabular}{@{}p{3cm}p{3cm}p{2.7cm}p{3.0cm}@{}}
\toprule
Phase & Data & Object learned & Role \\
\midrule
1. context-conditioned value layer & $W_{\mathrm{homo}} \cup W_{\mathrm{hetero}}$ & $\tilde\psi_i(\cdot \mid z_i, w_{-i})$ & lower layer, Req.~\ref{req:valid} conditioning \\
2. learned composer & tasks needing correction & upper selectors $\Upsilon^\theta$ & upper layer, Lem.~\ref{lem:l2unsafe}, Props.~\ref{prop:l2supermod}, \ref{prop:l2factored} \\
\bottomrule
\end{tabular}
\end{table}

\subsection{Pseudocode}
\label{app:algo}

Algorithm~\ref{alg:train} states the training procedure and Algorithm~\ref{alg:deploy} the deployment-time forward pass, making concrete the two-phase protocol of Section~\ref{sec:method} and the notation of \eqref{eq:lowersf}. Training produces two artifacts that are never adapted per objective: the frozen per-agent value layer $\tilde\psi_i(s, a_i \mid z_i, w_{-i})$ and the composer $\Upsilon^\theta = (\Upsilon^\theta_1, \dots, \Upsilon^\theta_N)$. Deployment runs the composer on every test objective $w_{\mathrm{test}} = (w_1, \dots, w_N)$: because it is initialized at the independent transfer policy and trained only in value-increasing directions with the value layer frozen, on every objective it is at least as good as that policy and improves on it wherever a cross-agent correction helps. The composer is always the deciding rule; the fixed rules take no part in deployment.

\begin{algorithm}[htbp]
\caption{MA-USFA training (run once before deployment)}
\label{alg:train}
\begin{algorithmic}[1]
\Require feature map $\phi$; training tasks $W_{\mathrm{train}} = W_{\mathrm{homo}} \cup W_{\mathrm{hetero}}$; correction tasks $W_{\mathrm{corr}}$ (coupled dynamics or weights outside $K_\phi$); discount $\gamma$
\Statex \textbf{Phase 1 --- per-agent context-conditioned value layer} (lower layer, Eq.~\ref{eq:lowersf})
\For{each $w = (w_1,\dots,w_N)$ sampled at random from $W_{\mathrm{train}}$}
  \State sample policy encodings $z_i \sim D_z(\cdot \mid w_i)$; form the teammate context $w_{-i} = (w_j)_{j\neq i}$
  \State update each $\tilde\psi_i(s, a_i \mid z_i, w_{-i})$ toward $\phi_i + \gamma\,\tilde\psi_i(s', a_i' \mid z_i, w_{-i})$
\EndFor
\State freeze the value layer $\{\tilde\psi_i\}$
\Statex \textbf{Phase 2 --- learned composer} (upper layer, Lem.~\ref{lem:l2unsafe}, Props.~\ref{prop:l2supermod}, \ref{prop:l2factored})
\State initialize the composer head $\theta$ at zero \Comment{training starts exactly at the transfer policy}
\For{each $w \in W_{\mathrm{corr}}$ (weights outside the cone $K_\phi$, or coupled dynamics)}
  \State price candidates $q_i(z_i) \gets \tilde\psi_i(s, \cdot, z_i)^\top w_i$ for $z_i \in C_i$ using the frozen value layer
  \State select $g_i \gets \Upsilon^\theta_i\big(s,\, w,\, \{q_i(z_i)\}\big)$ for all $i$; execute the joint action; observe reward
  \State update $\theta$ by per-agent temporal difference (value layer frozen)
\EndFor
\State \Return frozen value layer $\{\tilde\psi_i\}$ and composer $\Upsilon^\theta$
\end{algorithmic}
\end{algorithm}

\begin{algorithm}[htbp]
\caption{MA-USFA deployment (single forward pass, no gradient updates)}
\label{alg:deploy}
\begin{algorithmic}[1]
\Require test objective $w_{\mathrm{test}} = (w_1,\dots,w_N)$; frozen $\{\tilde\psi_i\}$; composer $\Upsilon^\theta$; candidate sets $\{C_i\}$
\For{each decision state $s$}
  \State price candidates $q_i(z_i) \gets \tilde\psi_i(s, \cdot, z_i)^\top w_i$ for every $z_i \in C_i$ \Comment{one dot product per candidate}
  \State $a_i \gets$ action of entry $g_i = \Upsilon^\theta_i\big(s,\, w_{\mathrm{test}},\, \{q_i(z_i)\}\big)$ for all $i$ \Comment{composer selects on every objective}
  \State execute the joint action $(a_1,\dots,a_N)$
\EndFor
\end{algorithmic}
\end{algorithm}

\paragraph{Implementation choices.} Three choices make the composer small and stable, and they are shared across both experimental domains. The composer head is initialized at zero, so training starts exactly at the transfer policy; the value layer is frozen while the composer trains, protecting the library from drift; and for large teams the composer attends over the interaction graph rather than the full joint state, a one-hop graph attention pass per selector~\citep{velickovic2018graph}, which is what makes the $196$-agent traffic network of Appendix~\ref{app:tsc} feasible. The first two hold in every run; the third is the neighbor-limited instantiation used at city scale (Appendix~\ref{app:tsc}).

\paragraph{Extension: deployment without a composer.} One extension lies outside the method proper. If the composer cannot be trained at all, for instance under a learning budget too small to fit it, deployment can use a fixed rule in its place, and the analysis of Section~\ref{sec:theory} says which one: synchronized composition by default, and the independent rule only where its two requirements hold. This chooses among the rules we analyze rather than the one we propose; MA-USFA itself needs no such choice.

\subsection{Composition rules studied as baselines}

MA-USFA is analyzed against two fixed composition rules, which are objects of study rather than components of the method. Table~\ref{tab:spectrum} places the three side by side, listing their mechanism, the set of joint policies each can produce, and the safety status established in Section~\ref{sec:theory}.

\begin{table}[htbp]
\centering
\caption{The coupling spectrum of composition rules. Composition space is the set of joint policies a rule can produce. Safety is with respect to \eqref{eq:safety}.}
\label{tab:spectrum}
\begin{tabular}{llll}
\toprule
Rule & Mechanism & Composition space & Safety \\
\midrule
Sync & synchronized shared index & $K$ library entries & unconditional (Prop.~\ref{prop:l1safe}) \\
Indep & independent per-agent argmax & product $K^N$ & conditional (Prop.~\ref{prop:l2supermod}, \ref{prop:l2factored}) \\
MA-USFA & learned composer & product $K^N$ & learned (Sec.~\ref{sec:method}) \\
\bottomrule
\end{tabular}
\end{table}

\section{Controlled Domain: SFWorld}
\label{app:toy}

This appendix reports the controlled-domain experiment in full: the environment and setup, the crossover result behind the main-body table, and the team-size and library-coverage sweeps.

\subsection{Setup}

\paragraph{Environment.} SFWorld is a $5 \times 5$ grid with $N \in \{2, 3, 4, 5\}$ agents, five actions (four moves and stay), horizon $40$, discount $\gamma = 0.95$. The features $\phi_i(s')$ of agent $i$ are dense smooth bumps over $K$ resource cells, shaped after the congestion signals of the traffic domain; the reward of agent $i$ is $r_i = \phi_i(s')^\top w_i - 2.0 \cdot \mathrm{blocked}_i$, a task weight times the features minus a collision penalty that lives outside the feature basis, so any closed-form pricing $\psi \cdot w$ is structurally blind to it. The coupling parameter $\kappa$ governs conflicts: when two agents target the same cell, the conflict blocks all but one of them with probability $\kappa$. At $\kappa = 0$ the dynamics factorize exactly, the free region of Proposition~\ref{prop:l2factored} is in force, and the collision rate is zero at every $\kappa$ on $A_{\mathrm{distinct}}$; as $\kappa$ grows, each agent's effective kernel depends on the teammates' actions, which is transition coupling. Two task families separate the channels (Fig.~\ref{fig:sfworld}): $A_{\mathrm{distinct}}$, where the agents' optimal regions are disjoint so coupling never binds, and $B_{\mathrm{overlap}}$, where the agents contend for a shared region so the same weights create collisions whenever the joint policy hybridizes.

\begin{figure}[htbp]
\centering
\includegraphics[width=0.82\textwidth]{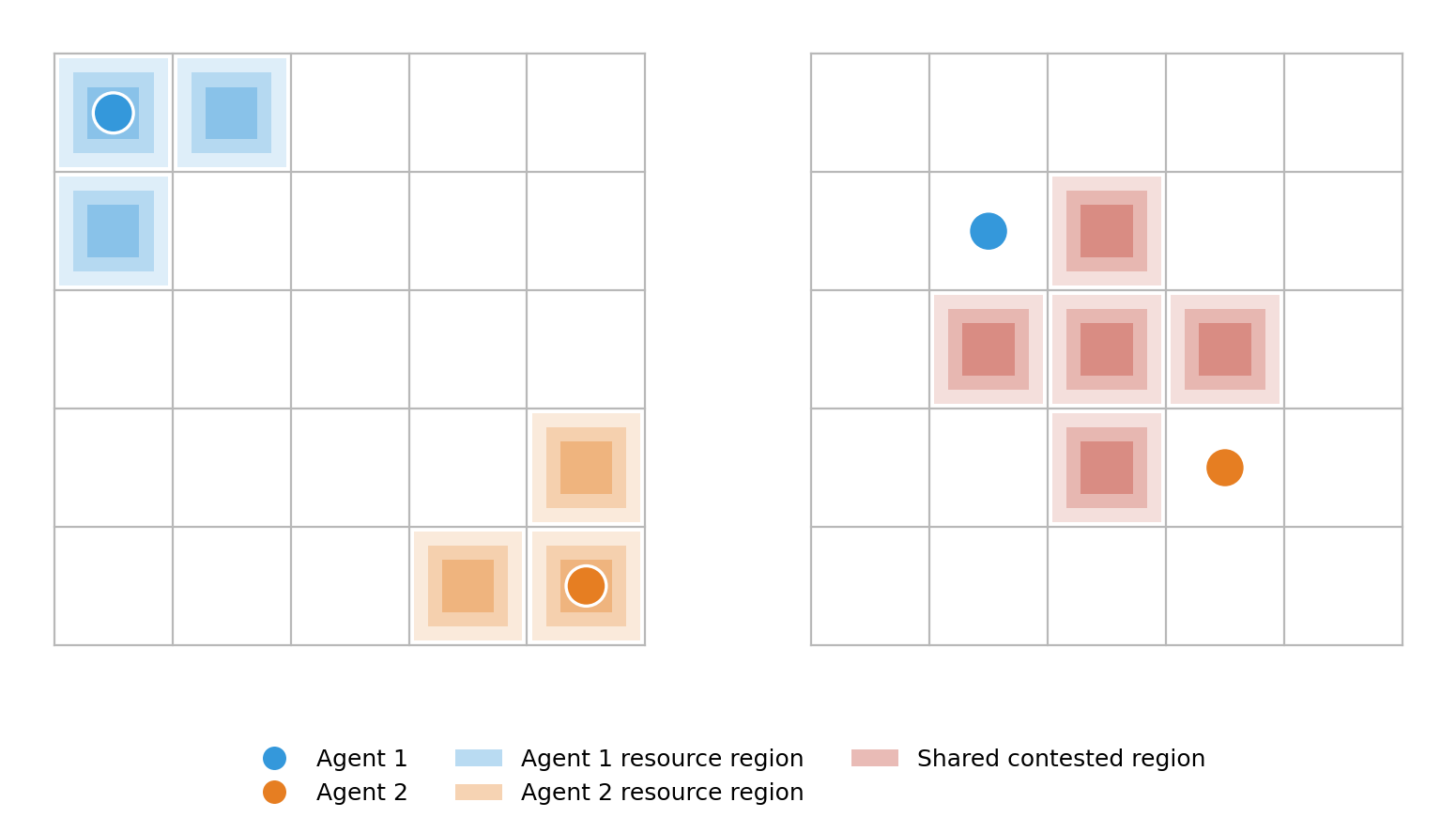}
\caption{The SFWorld scene and its two task families, shown for $N = 2$ on the $5 \times 5$ grid. Each agent earns reward by occupying cells in its own resource region, whose value under a task is $\phi_i(s')^\top w_i$; a collision penalty, outside the feature basis, is charged whenever two agents target the same cell. Left ($A_{\mathrm{distinct}}$): the two resource regions are disjoint. Right ($B_{\mathrm{overlap}}$): a single contested region is valuable to both agents. The coupling probability $\kappa$ scales how often a shared-cell conflict blocks an agent.}
\label{fig:sfworld}
\end{figure}

\paragraph{Library.} The library holds $K$ policies specialized to corner objectives. The main configuration uses the four corners of the weight simplex ($K = 4$); the library-content sweep (Fig.~\ref{fig:kcoverage}) varies both the size and the content: the four corners alone (P4), the corners plus the contested joint action $(0,0)$ (P4$+$(0,0)), plus both $(0,0)$ and $(1,1)$ (P4$+$(0,0)$+$(1,1)), the corners plus four copies of one entry (P4$+$4 same), and libraries built around $(0,0)$ alone or with one or two additional entries. The team-size sweep uses $K = 4$ for $N \leq 4$ and $K = 8$ for $N = 5$.

\paragraph{Learning.} Successor features are trained with the USFA protocol: the TD target is the feature vector $\phi$, the weight plays the three roles of behavior anchor, library anchor ($z \sim D_z(\cdot \mid w)$), and test-time pricing vector. Training budgets scale with the team size: $8{,}000$ episodes for $N = 2$, $6{,}000$ for $N = 3$, $5{,}000$ for $N = 4$, $4{,}000$ for $N = 5$; the composer is trained for $1{,}000$ episodes at $N = 2$, $2{,}000$ at $N = 3$, $1{,}600$ at $N = 4$, $1{,}400$ at $N = 5$; per-task retraining runs $4{,}000$ episodes. Evaluation averages $120$ rollouts per configuration. The composer is warm-started from the transfer policy (zero-initialized correction head) and trained with per-agent temporal-difference updates with the feature backbone frozen; of the three composer instantiations, an attention head, a message-passing head, and a QMIX-style mixing head, the three variants are interchangeable in this domain because the optimal deferral is static, so the paper reports the best of the three. The coupling sweep uses $\kappa \in \{0, 0.25, 0.5, 0.75, 1.0\}$ in the main matrix and $\kappa \in \{0, 0.5, 1.0\}$ in the team-size sweep.

\paragraph{Baselines.} The synchronized rule is the composition rule of \eqref{eq:l1}; the independent rule is the composition rule of \eqref{eq:l2}; joint-GPI is the centralized generalized policy improvement rule over the joint action space, representing prior work on multi-agent GPI (computed only for $N \leq 3$, where the joint space can be enumerated; the joint-GPI entries for $N \geq 4$ in Table~\ref{tab:nfull} are missing by design); per-task retraining optimizes the penalty-inclusive reward from scratch and is the oracle ceiling. All closed-form operators use successor features trained on the harvest features only, as theory prescribes; the collision penalty is deliberately outside the feature basis so that no closed-form rule can price it. The two-phase training protocol of MA-USFA is given in Appendix~\ref{app:method} (Table~\ref{tab:phases}); the phase budgets per team size are those reported under Learning above.

\subsection{The crossover}

Table~\ref{tab:mainbody} in the body reports the $N = 2$ main matrix. Fig.~\ref{fig:crossover} plots its $B_{\mathrm{overlap}}$ block, so the crossover of the two fixed rules and the tracking of MA-USFA against retraining are visible as curves in the coupling parameter $\kappa$.

\begin{figure}[t!]
\centering
\includegraphics[width=0.72\textwidth]{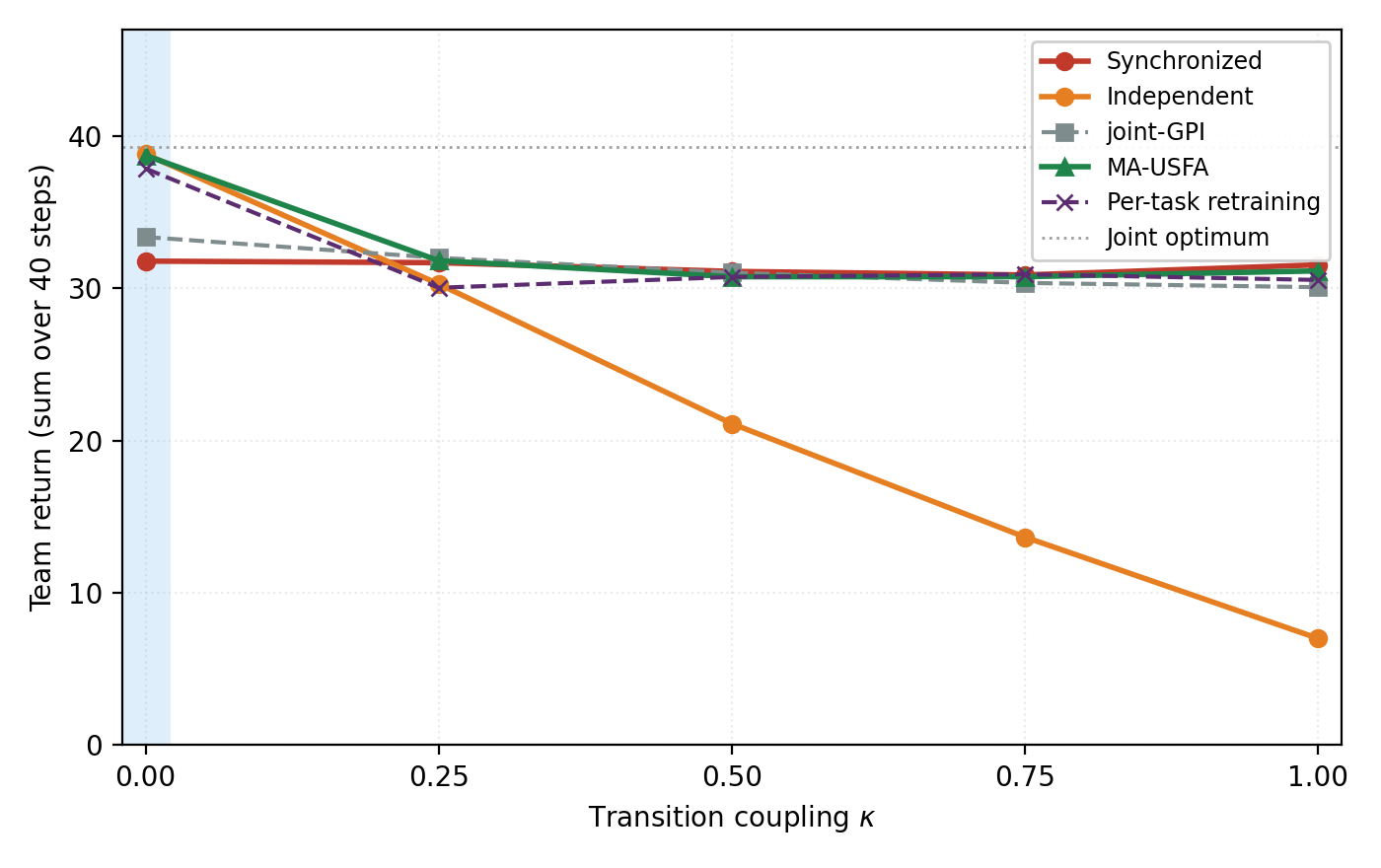}
\caption{Crossover experiment on $B_{\mathrm{overlap}}$ ($N = 2$, four-entry corner library), the $B_{\mathrm{overlap}}$ block of Table~\ref{tab:mainbody} in visual form. Curves show team return (summed over the $40$-step horizon) of the three composition rules of Section~\ref{sec:prelim} (Synchronized, Independent, MA-USFA), the joint-GPI baseline of prior work, and per-task retraining, as the transition coupling $\kappa$ (the probability that two agents targeting the same cell collide) increases from $0$ to $1$. The dotted horizontal line marks the per-task joint optimum; the shaded strip at $\kappa = 0$ marks the factorized regime.}
\label{fig:crossover}
\end{figure}

\subsection{Team-size sweep}

Table~\ref{tab:nfull} and Fig.~\ref{fig:nsweep} sweep the team from $N = 2$ to $5$ on both task families at $\kappa \in \{0, 0.5, 1.0\}$. The paid-region collapse of the independent rule deepens sharply with team size, from $7.07$ at $N = 2$ to $-37.00$ at $\kappa = 1.0$ as its collision rate approaches $0.6$; the synchronized rule stays flat and safe at every size, and MA-USFA recovers the retraining level at every $N$ and $\kappa$. On $A_{\mathrm{distinct}}$ the collision rate is zero at all sizes and the independent rule is free, the null control of the design.

\begin{table}[t!]
\centering
\caption{Team-size sweep, corner library ($K = 4$ for $N \leq 4$, $K = 8$ for $N = 5$), all three coupling levels. Team return over $40$ steps, mean $\pm$ standard deviation over three evaluation seeds. The joint-GPI baseline is omitted ($-$) for $N \geq 4$ because the joint action space can no longer be enumerated. The Average row under each team size gives each method's mean over the six task and $\kappa$ settings at that $N$.}
\label{tab:nfull}
\small
\resizebox{\textwidth}{!}{%
\begin{tabular}{@{}llrrrrrrr@{}}
\toprule
$N$ & task & $\kappa$ & Sync & Indep & joint-GPI & MA-USFA & retrain & $\mathrm{coll_{ind}}$ \\
\midrule
2 & $A_{\mathrm{distinct}}$ & 0.00 & $23.42{\scriptstyle\pm0.34}$ & $\mathbf{38.73}{\scriptstyle\pm0.47}$ & $23.74{\scriptstyle\pm0.29}$ & $38.40{\scriptstyle\pm0.53}$ & $\underline{38.69}{\scriptstyle\pm0.48}$ & 0.000 \\
2 & $A_{\mathrm{distinct}}$ & 0.50 & $23.37{\scriptstyle\pm0.16}$ & $\underline{38.23}{\scriptstyle\pm0.01}$ & $23.63{\scriptstyle\pm0.14}$ & $38.17{\scriptstyle\pm0.22}$ & $\mathbf{38.35}{\scriptstyle\pm0.10}$ & 0.000 \\
2 & $A_{\mathrm{distinct}}$ & 1.00 & $23.44{\scriptstyle\pm0.15}$ & $\underline{38.31}{\scriptstyle\pm0.27}$ & $23.80{\scriptstyle\pm0.23}$ & $38.25{\scriptstyle\pm0.18}$ & $\mathbf{38.48}{\scriptstyle\pm0.16}$ & 0.000 \\
2 & $B_{\mathrm{overlap}}$ & 0.00 & $31.52{\scriptstyle\pm0.26}$ & $\mathbf{38.77}{\scriptstyle\pm0.14}$ & $32.59{\scriptstyle\pm0.20}$ & $\underline{38.44}{\scriptstyle\pm0.13}$ & $37.56{\scriptstyle\pm0.25}$ & 0.000 \\
2 & $B_{\mathrm{overlap}}$ & 0.50 & $\mathbf{31.27}{\scriptstyle\pm0.09}$ & $21.04{\scriptstyle\pm0.94}$ & $30.65{\scriptstyle\pm0.14}$ & $30.94{\scriptstyle\pm0.30}$ & $\underline{31.08}{\scriptstyle\pm0.25}$ & 0.225 \\
2 & $B_{\mathrm{overlap}}$ & 1.00 & $\mathbf{31.24}{\scriptstyle\pm0.16}$ & $7.07{\scriptstyle\pm0.20}$ & $28.62{\scriptstyle\pm0.60}$ & $\underline{31.18}{\scriptstyle\pm0.18}$ & $30.33{\scriptstyle\pm0.31}$ & 0.369 \\
\addlinespace
\multicolumn{3}{@{}l}{Average, $N{=}2$} & 27.38 & 30.36 & 27.17 & $\mathbf{35.90}$ & $\underline{35.75}$ & 0.099 \\
\midrule
3 & $A_{\mathrm{distinct}}$ & 0.00 & $26.14{\scriptstyle\pm0.19}$ & $\mathbf{58.22}{\scriptstyle\pm0.48}$ & $27.70{\scriptstyle\pm0.21}$ & $57.98{\scriptstyle\pm0.51}$ & $\underline{58.14}{\scriptstyle\pm0.49}$ & 0.000 \\
3 & $A_{\mathrm{distinct}}$ & 0.50 & $25.79{\scriptstyle\pm0.06}$ & $\underline{57.35}{\scriptstyle\pm0.35}$ & $25.61{\scriptstyle\pm0.17}$ & $57.07{\scriptstyle\pm0.43}$ & $\mathbf{57.52}{\scriptstyle\pm0.11}$ & 0.000 \\
3 & $A_{\mathrm{distinct}}$ & 1.00 & $25.47{\scriptstyle\pm0.16}$ & $56.74{\scriptstyle\pm0.81}$ & $25.13{\scriptstyle\pm0.91}$ & $\underline{57.54}{\scriptstyle\pm0.65}$ & $\mathbf{57.68}{\scriptstyle\pm0.19}$ & 0.000 \\
3 & $B_{\mathrm{overlap}}$ & 0.00 & $43.64{\scriptstyle\pm0.29}$ & $\mathbf{58.14}{\scriptstyle\pm0.38}$ & $48.23{\scriptstyle\pm0.07}$ & $\underline{57.91}{\scriptstyle\pm0.37}$ & $57.85{\scriptstyle\pm0.36}$ & 0.000 \\
3 & $B_{\mathrm{overlap}}$ & 0.50 & $\underline{43.19}{\scriptstyle\pm0.29}$ & $22.38{\scriptstyle\pm0.83}$ & $38.92{\scriptstyle\pm0.12}$ & $\mathbf{43.60}{\scriptstyle\pm0.21}$ & $42.86{\scriptstyle\pm0.24}$ & 0.329 \\
3 & $B_{\mathrm{overlap}}$ & 1.00 & $42.07{\scriptstyle\pm0.18}$ & $-6.52{\scriptstyle\pm0.40}$ & $31.80{\scriptstyle\pm1.10}$ & $\mathbf{42.63}{\scriptstyle\pm0.14}$ & $\underline{42.22}{\scriptstyle\pm0.17}$ & 0.504 \\
\addlinespace
\multicolumn{3}{@{}l}{Average, $N{=}3$} & 34.38 & 41.05 & 32.90 & $\mathbf{52.79}$ & $\underline{52.71}$ & 0.139 \\
\midrule
4 & $A_{\mathrm{distinct}}$ & 0.00 & $29.05{\scriptstyle\pm0.29}$ & $\mathbf{77.43}{\scriptstyle\pm0.45}$ & --- & $77.05{\scriptstyle\pm0.49}$ & $\underline{77.29}{\scriptstyle\pm0.46}$ & 0.000 \\
4 & $A_{\mathrm{distinct}}$ & 0.50 & $28.51{\scriptstyle\pm0.07}$ & $\mathbf{76.52}{\scriptstyle\pm0.12}$ & --- & $75.93{\scriptstyle\pm0.35}$ & $\underline{76.46}{\scriptstyle\pm0.73}$ & 0.000 \\
4 & $A_{\mathrm{distinct}}$ & 1.00 & $27.64{\scriptstyle\pm0.35}$ & $75.19{\scriptstyle\pm1.40}$ & --- & $\underline{75.99}{\scriptstyle\pm0.72}$ & $\mathbf{76.40}{\scriptstyle\pm0.86}$ & 0.000 \\
4 & $B_{\mathrm{overlap}}$ & 0.00 & $55.84{\scriptstyle\pm0.49}$ & $\mathbf{77.78}{\scriptstyle\pm0.40}$ & --- & $77.31{\scriptstyle\pm0.41}$ & $\underline{77.48}{\scriptstyle\pm0.39}$ & 0.000 \\
4 & $B_{\mathrm{overlap}}$ & 0.50 & $\underline{55.00}{\scriptstyle\pm0.22}$ & $23.08{\scriptstyle\pm0.26}$ & --- & $\mathbf{55.03}{\scriptstyle\pm0.09}$ & $54.40{\scriptstyle\pm0.60}$ & 0.349 \\
4 & $B_{\mathrm{overlap}}$ & 1.00 & $53.43{\scriptstyle\pm0.30}$ & $-21.48{\scriptstyle\pm0.06}$ & --- & $\mathbf{54.52}{\scriptstyle\pm0.69}$ & $\underline{53.79}{\scriptstyle\pm0.44}$ & 0.556 \\
\addlinespace
\multicolumn{3}{@{}l}{Average, $N{=}4$} & 41.58 & 51.42 & --- & $\mathbf{69.31}$ & $\underline{69.30}$ & 0.151 \\
\midrule
5 & $A_{\mathrm{distinct}}$ & 0.00 & $49.89{\scriptstyle\pm0.22}$ & $\mathbf{98.75}{\scriptstyle\pm0.41}$ & --- & $98.22{\scriptstyle\pm0.42}$ & $\underline{98.67}{\scriptstyle\pm0.40}$ & 0.000 \\
5 & $A_{\mathrm{distinct}}$ & 0.50 & $48.72{\scriptstyle\pm0.42}$ & $96.34{\scriptstyle\pm0.47}$ & --- & $\underline{96.78}{\scriptstyle\pm0.32}$ & $\mathbf{97.45}{\scriptstyle\pm0.20}$ & 0.000 \\
5 & $A_{\mathrm{distinct}}$ & 1.00 & $47.62{\scriptstyle\pm0.29}$ & $92.86{\scriptstyle\pm0.75}$ & --- & $\underline{95.94}{\scriptstyle\pm1.00}$ & $\mathbf{96.40}{\scriptstyle\pm1.44}$ & 0.003 \\
5 & $B_{\mathrm{overlap}}$ & 0.00 & $50.42{\scriptstyle\pm0.15}$ & $\mathbf{97.51}{\scriptstyle\pm0.40}$ & --- & $96.70{\scriptstyle\pm0.51}$ & $\underline{96.99}{\scriptstyle\pm0.43}$ & 0.000 \\
5 & $B_{\mathrm{overlap}}$ & 0.50 & $48.94{\scriptstyle\pm0.22}$ & $23.38{\scriptstyle\pm1.20}$ & --- & $\underline{67.23}{\scriptstyle\pm0.14}$ & $\mathbf{67.44}{\scriptstyle\pm0.29}$ & 0.364 \\
5 & $B_{\mathrm{overlap}}$ & 1.00 & $47.29{\scriptstyle\pm0.18}$ & $-37.00{\scriptstyle\pm0.31}$ & --- & $\mathbf{66.34}{\scriptstyle\pm0.35}$ & $\underline{66.11}{\scriptstyle\pm0.14}$ & 0.601 \\
\addlinespace
\multicolumn{3}{@{}l}{Average, $N{=}5$} & 48.81 & 61.97 & --- & $\underline{86.87}$ & $\mathbf{87.18}$ & 0.161 \\
\bottomrule
\end{tabular}}
\end{table}

\begin{figure}[htbp]
\centering
\includegraphics[width=0.95\textwidth]{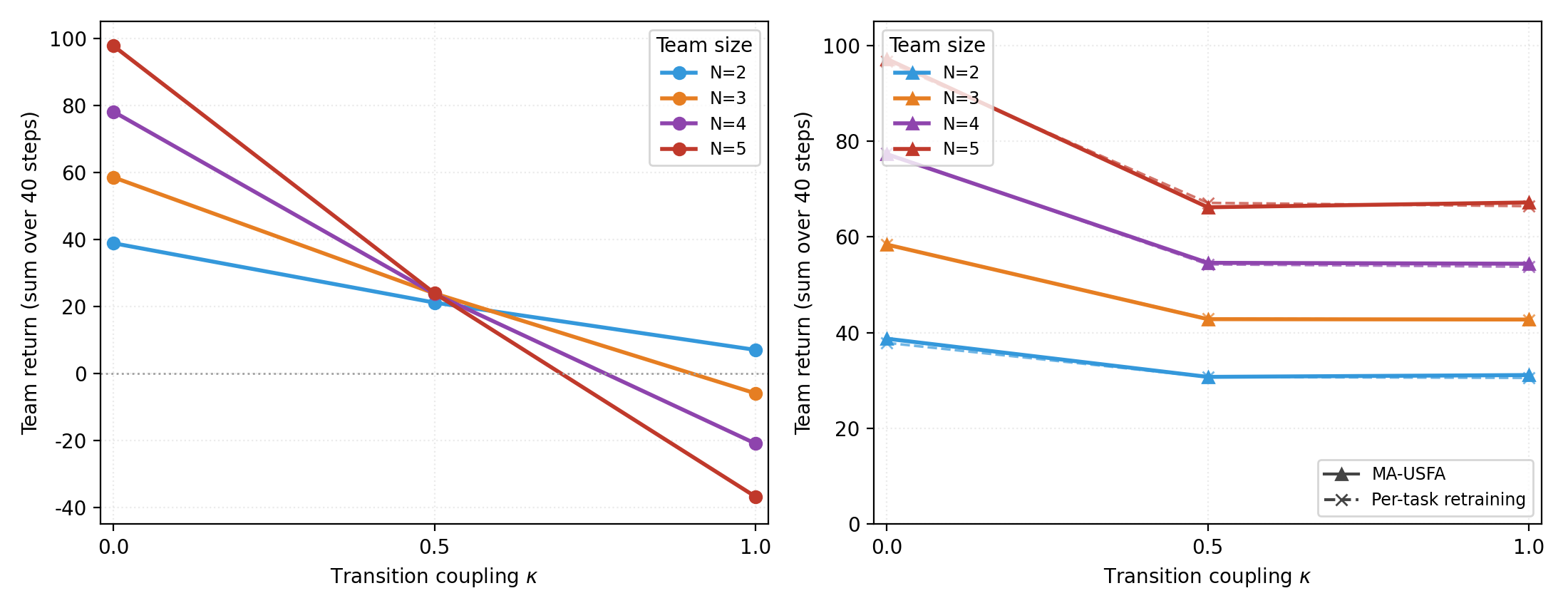}
\caption{Team-size sweep on $B_{\mathrm{overlap}}$ for $N = 2$ to $5$ at $\kappa \in \{0, 0.5, 1\}$, where $\kappa$ is the collision probability and team return is summed over the $40$-step horizon. Left: team return of independent composition, one curve per team size $N$. Right: MA-USFA (solid) and per-task retraining (dashed), one color per $N$. Synchronized composition is omitted for legibility.}
\label{fig:nsweep}
\end{figure}

\subsection{Library-coverage sweep}

Table~\ref{tab:kfull} and Fig.~\ref{fig:kcoverage} vary the library content on $B_{\mathrm{overlap}}$ ($N = 2$), from a single entry to the corner library augmented with the contested joint action $(0,0)$, and show that the value of coverage is regime-dependent. The mechanism is that Proposition~\ref{prop:l1safe} assumes the anchor price $\psi^k(s,a)^\top w_{\mathrm{test}}$ equals the true value of entry $k$ under $w_{\mathrm{test}}$, which is exact inside the linear feature model but becomes a transfer-blind estimate once the deployed reward carries a component outside the feature basis. In the free region ($\kappa = 0$) adding the contested vertex is pure gain, lifting the synchronized rule from $31.52$ to $38.72$, near the joint optimum; in the paid region ($\kappa = 1.0$) the same addition locks the team onto the entry the anchor overprices and crashes it to $7.08$ while the library still contains entries worth $31.24$, a signed gap of $-24.16$, and enlarging the library further does not cure it. The learned composer is immune to library content, returning the same $31.18$ across every row of the paid region.

\begin{table}[t!]
\centering
\caption{Library-content sweep ($N = 2$, $B_{\mathrm{overlap}}$). Team return over $40$ steps, mean $\pm$ standard deviation over three evaluation seeds. P4 is the four-entry corner library; the other rows add or replace entries around the contested joint action $(0,0)$. $\max_k V^k$ is the value of the best available source and the last column is the signed gap of the synchronized rule to it. The final row averages each method over all library variants and coupling levels.}
\label{tab:kfull}
\small
\resizebox{\textwidth}{!}{%
\begin{tabular}{@{}lrrrrrrrr@{}}
\toprule
library & $\kappa$ & Sync & Indep & MA-USFA & retrain & $\mathrm{coll_{ind}}$ & $\max_k V^k$ & Sync $-$ $\max_k V^k$ \\
\midrule
$(0,0)$ only & 0.00 & $\underline{38.72}{\scriptstyle\pm0.16}$ & $\mathbf{38.77}{\scriptstyle\pm0.14}$ & $38.19{\scriptstyle\pm0.09}$ & $37.56{\scriptstyle\pm0.25}$ & 0.000 & 38.72 & $+0.00$ \\
$(0,0)$ only & 0.50 & $21.18{\scriptstyle\pm0.34}$ & $21.04{\scriptstyle\pm0.94}$ & $\underline{28.34}{\scriptstyle\pm0.42}$ & $\mathbf{31.08}{\scriptstyle\pm0.25}$ & 0.225 & 21.18 & $+0.00$ \\
$(0,0)$ only & 1.00 & $7.08{\scriptstyle\pm0.05}$ & $7.07{\scriptstyle\pm0.20}$ & $\mathbf{31.18}{\scriptstyle\pm0.18}$ & $\underline{30.33}{\scriptstyle\pm0.31}$ & 0.369 & 7.08 & $+0.00$ \\
$(0,0)+(1,2)$ & 0.00 & $\underline{38.72}{\scriptstyle\pm0.16}$ & $\mathbf{38.77}{\scriptstyle\pm0.14}$ & $38.19{\scriptstyle\pm0.09}$ & $37.56{\scriptstyle\pm0.25}$ & 0.000 & 38.72 & $+0.00$ \\
$(0,0)+(1,2)$ & 0.50 & $21.18{\scriptstyle\pm0.34}$ & $21.04{\scriptstyle\pm0.94}$ & $\underline{28.34}{\scriptstyle\pm0.42}$ & $\mathbf{31.08}{\scriptstyle\pm0.25}$ & 0.225 & 21.18 & $+0.00$ \\
$(0,0)+(1,2)$ & 1.00 & $7.08{\scriptstyle\pm0.05}$ & $7.07{\scriptstyle\pm0.20}$ & $\mathbf{31.18}{\scriptstyle\pm0.18}$ & $\underline{30.33}{\scriptstyle\pm0.31}$ & 0.369 & 7.08 & $+0.00$ \\
$(0,0)+(1,2)+(2,1)$ & 0.00 & $\underline{38.72}{\scriptstyle\pm0.16}$ & $\mathbf{38.77}{\scriptstyle\pm0.14}$ & $38.19{\scriptstyle\pm0.09}$ & $37.56{\scriptstyle\pm0.25}$ & 0.000 & 38.72 & $+0.00$ \\
$(0,0)+(1,2)+(2,1)$ & 0.50 & $21.18{\scriptstyle\pm0.34}$ & $21.04{\scriptstyle\pm0.94}$ & $\underline{28.34}{\scriptstyle\pm0.42}$ & $\mathbf{31.08}{\scriptstyle\pm0.25}$ & 0.225 & 21.18 & $+0.00$ \\
$(0,0)+(1,2)+(2,1)$ & 1.00 & $7.08{\scriptstyle\pm0.05}$ & $7.07{\scriptstyle\pm0.20}$ & $\mathbf{31.18}{\scriptstyle\pm0.18}$ & $\underline{30.33}{\scriptstyle\pm0.31}$ & 0.369 & 14.83 & $-7.75$ \\
P4 corners & 0.00 & $31.52{\scriptstyle\pm0.26}$ & $\mathbf{38.77}{\scriptstyle\pm0.14}$ & $\underline{38.19}{\scriptstyle\pm0.09}$ & $37.56{\scriptstyle\pm0.25}$ & 0.000 & 31.52 & $+0.00$ \\
P4 corners & 0.50 & $\mathbf{31.27}{\scriptstyle\pm0.09}$ & $21.04{\scriptstyle\pm0.94}$ & $28.34{\scriptstyle\pm0.42}$ & $\underline{31.08}{\scriptstyle\pm0.25}$ & 0.225 & 31.27 & $+0.00$ \\
P4 corners & 1.00 & $\mathbf{31.24}{\scriptstyle\pm0.16}$ & $7.07{\scriptstyle\pm0.20}$ & $\underline{31.18}{\scriptstyle\pm0.18}$ & $30.33{\scriptstyle\pm0.31}$ & 0.369 & 31.24 & $+0.00$ \\
P4$+$(0,0) & 0.00 & $\underline{38.72}{\scriptstyle\pm0.16}$ & $\mathbf{38.77}{\scriptstyle\pm0.14}$ & $38.19{\scriptstyle\pm0.09}$ & $37.56{\scriptstyle\pm0.25}$ & 0.000 & 38.72 & $+0.00$ \\
P4$+$(0,0) & 0.50 & $21.18{\scriptstyle\pm0.34}$ & $21.04{\scriptstyle\pm0.94}$ & $\underline{28.34}{\scriptstyle\pm0.42}$ & $\mathbf{31.08}{\scriptstyle\pm0.25}$ & 0.225 & 31.27 & $-10.10$ \\
P4$+$(0,0) & 1.00 & $7.08{\scriptstyle\pm0.05}$ & $7.07{\scriptstyle\pm0.20}$ & $\mathbf{31.18}{\scriptstyle\pm0.18}$ & $\underline{30.33}{\scriptstyle\pm0.31}$ & 0.369 & 31.24 & $-24.16$ \\
P4$+$(0,0)$+$(1,1) & 0.00 & $\underline{38.72}{\scriptstyle\pm0.16}$ & $\mathbf{38.77}{\scriptstyle\pm0.14}$ & $38.19{\scriptstyle\pm0.09}$ & $37.56{\scriptstyle\pm0.25}$ & 0.000 & 38.72 & $+0.00$ \\
P4$+$(0,0)$+$(1,1) & 0.50 & $21.18{\scriptstyle\pm0.34}$ & $21.04{\scriptstyle\pm0.94}$ & $\underline{28.34}{\scriptstyle\pm0.42}$ & $\mathbf{31.08}{\scriptstyle\pm0.25}$ & 0.225 & 31.27 & $-10.10$ \\
P4$+$(0,0)$+$(1,1) & 1.00 & $7.08{\scriptstyle\pm0.05}$ & $7.07{\scriptstyle\pm0.20}$ & $\mathbf{31.18}{\scriptstyle\pm0.18}$ & $\underline{30.33}{\scriptstyle\pm0.31}$ & 0.369 & 31.24 & $-24.16$ \\
P4$+$4 same & 0.00 & $\underline{38.72}{\scriptstyle\pm0.16}$ & $\mathbf{38.77}{\scriptstyle\pm0.14}$ & $38.19{\scriptstyle\pm0.09}$ & $37.56{\scriptstyle\pm0.25}$ & 0.000 & 38.72 & $+0.00$ \\
P4$+$4 same & 0.50 & $21.18{\scriptstyle\pm0.34}$ & $21.04{\scriptstyle\pm0.94}$ & $\underline{28.34}{\scriptstyle\pm0.42}$ & $\mathbf{31.08}{\scriptstyle\pm0.25}$ & 0.225 & 31.27 & $-10.10$ \\
P4$+$4 same & 1.00 & $7.08{\scriptstyle\pm0.05}$ & $7.07{\scriptstyle\pm0.20}$ & $\mathbf{31.18}{\scriptstyle\pm0.18}$ & $\underline{30.33}{\scriptstyle\pm0.31}$ & 0.369 & 31.24 & $-24.16$ \\
\midrule
\multicolumn{2}{@{}l}{Average (all rows)} & 23.61 & 22.29 & $\underline{32.57}$ & $\mathbf{32.99}$ & 0.198 & --- & $-5.26$ \\
\bottomrule
\end{tabular}}
\end{table}

\paragraph{The synchronized anchor must be valid.} Proposition~\ref{prop:l1safe} assumes that $\psi^k(s,a)^\top w_{\mathrm{test}}$ equals the true value of entry $k$ under $w_{\mathrm{test}}$; inside the linear feature model this is exact. When the deployed reward contains a component outside the feature basis, the estimate becomes a transfer-blind anchor that prices only what the features saw during training. Table~\ref{tab:kfull} and Fig.~\ref{fig:kcoverage} show the consequence: at $\kappa = 1$ the synchronized rule locks the team onto the contested entry $(0,0)$ and falls to $7.08$ while the library still contains entries worth $31.24$, a signed gap of $-24.16$, and enlarging the library does not cure it, because the contested entry is exactly the one the anchor overprices. The learned composer is immune to library content ($31.18$ across every row of the paid region).

\begin{figure}[t!]
\centering
\includegraphics[width=0.9\textwidth]{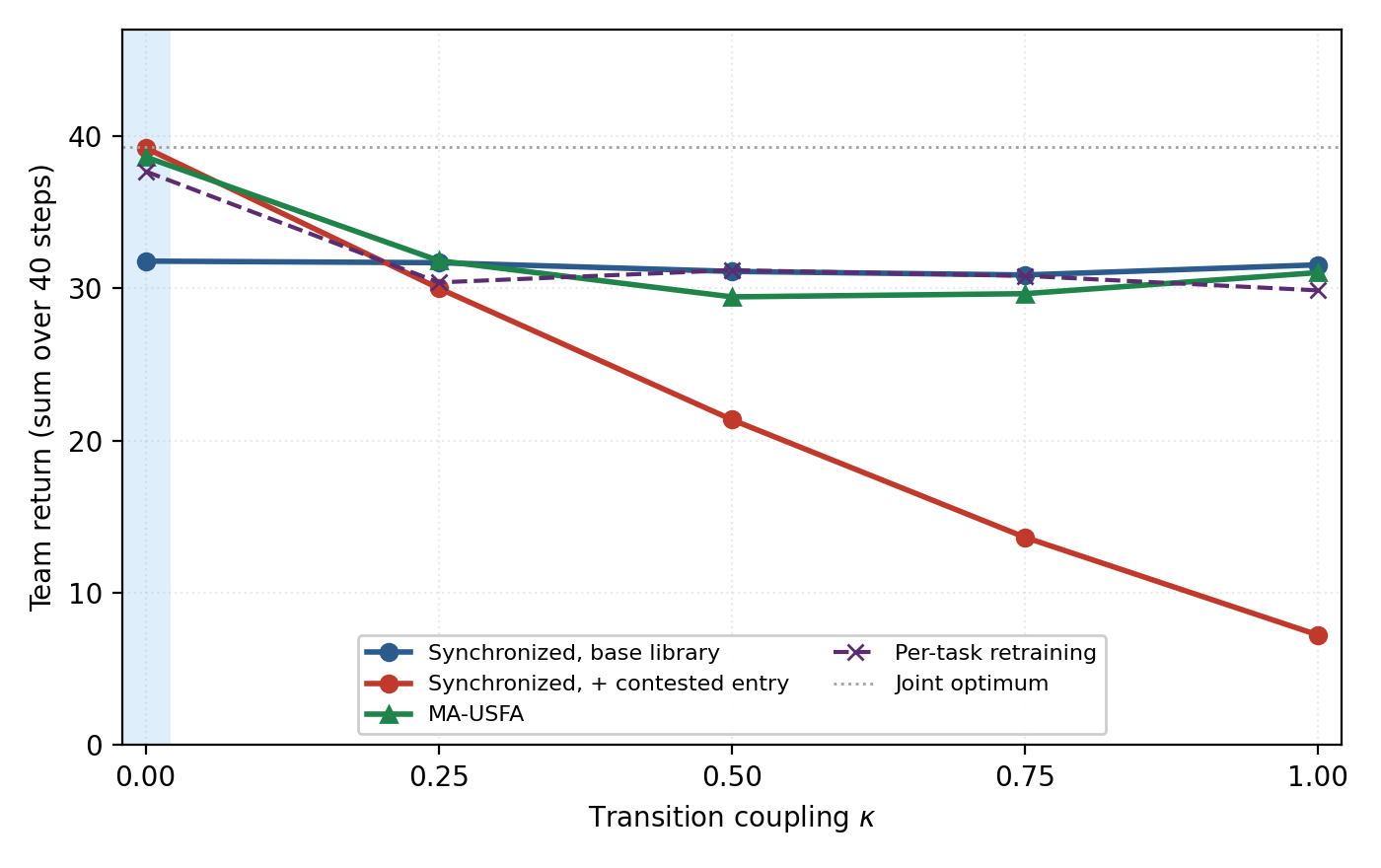}
\caption{Library-content sweep on $B_{\mathrm{overlap}}$ ($N = 2$). Team return (summed over the $40$-step horizon) versus transition coupling $\kappa$ for synchronized composition under two libraries, the base four-entry corner library and the same library with the contested joint entry $(0,0)$ added, compared against MA-USFA and per-task retraining. The dotted horizontal line marks the per-task joint optimum; the shaded strip at $\kappa = 0$ marks the factorized regime.}
\label{fig:kcoverage}
\end{figure}

\section{City-Scale Domain: Traffic Signal Control}
\label{app:tsc}

This appendix reports the traffic-signal experiment in full, in the same order as the controlled domain: the environment and setup, the main result, and the robustness sweeps.

\subsection{Setup}

\paragraph{Network and coupling.} The experiment runs on a Manhattan grid of $28 \times 7$ intersections, $196$ signal agents, one per intersection. Road cells are shared between adjacent intersections, so spillback couples the state of each intersection to the decisions of its upstream neighbors: the transition coupling is endogenous, with no free parameter to control, and the factorization condition of Proposition~\ref{prop:l2factored} fails by construction. The feature model is the project's composite traffic reward, which is already of the form $\phi^\top w$: $\phi = [\mathrm{queue}, \mathrm{wait}, \mathrm{pressure}, \mathrm{speed}]_{\mathrm{norm}}$, four normalized congestion features, so the composite reward is literally a successor-feature target and its weight vector is the task. All four features keep a fixed physical sign (queue, wait, and pressure are costs, speed is a gain); a task is a choice of the relative magnitudes on these four terms.

\paragraph{Homogeneous and heterogeneous tasks.} The two tasks differ only in whether the target weight is shared across the city or varies by intersection. In the \emph{homogeneous} task every intersection optimizes one shared weight vector, so the $196$ agents pursue a single common goal. 
In the \emph{heterogeneous} task each intersection draws its own weight independently, keeping the fixed signs but sampling the four magnitudes uniformly in $[0.5, 1.5]$ times their defaults, so different corners of the city weigh queue, waiting, pressure, and speed differently at the same moment; the closed-form transfer must then serve every node's own $w$ simultaneously, and the composer can additionally condition on its neighbors' weights. The reported metrics are the task return $\sum_i \phi_i^\top w_i$, mean travel time over completed trips, throughput (completed trips), mean queue, mean waiting time, and mean speed.

\paragraph{Composer.} The composer is the neighbor-limited instantiation of Section~\ref{sec:method}: a one-hop graph attention selector per intersection, $u_i = \psi_i^\top w_i + \rho \tanh(\mathrm{head}(\mathrm{GAT}_i(\mathrm{context})))$ with $\rho = 4.0$, zero-initialized so that training starts exactly at the transfer policy, with the feature backbone frozen; global information reaches each decision only through direct neighbors, the strict locality constraint a $196$-agent network imposes, following the neighbor-limited cooperative pattern of CoLight~\citep{wei2019colight}. The composer budget is $200$ episodes, short relative to the toy but expensive per episode at this scale.

\subsection{Main result}

\paragraph{Results.} Table~\ref{tab:tsc} reports the full numbers, and the picture is consistent across both task families: MA-USFA is best on every metric, and its margins over the two fixed rules are an order of magnitude larger than the gap between those rules, whose fine ordering we do not claim. On the homogeneous task the composer reaches return $2165$ against $2009$ (independent) and $1993$ (synchronized), throughput $518$ against $368$ and $356$, and mean waiting $6.60$ s against $10.35$ and $11.95$ s; on the heterogeneous task it reaches return $1650$ against $1515$ and $1527$, throughput $483$ against $337$ and $352$, and waiting $5.48$ s against $8.57$ and $8.50$ s. One entry needs reading with care: the composer's homogeneous travel time ($288.2$ s, the longest in its block) is a measurement artifact rather than a regression, since travel time is averaged over completed trips and the composer clears roughly $150$ extra long-distance trips that the fixed rules leave stranded, which lengthens the mean it is credited with. 

\begin{table}[htbp]
\centering
\caption{Traffic signal control on the Manhattan $28 \times 7$ network ($196$ intersections), on the homogeneous and heterogeneous tasks. Return is the task objective $\sum_i \phi_i^\top w_i$ summed over the horizon; the remaining columns are physical traffic metrics. Bold and underline mark the best and second-best method within each task family.}
\label{tab:tsc}
\resizebox{\textwidth}{!}{%
\begin{tabular}{llrrrrrr}
\toprule
Task & Method & Return & Travel (s) & Throughput & Queue & Wait (s) & Speed (m/s) \\
\midrule
Homo & Sync (anchor) & 1993 & 280.1 & 356 & 3.13 & 11.95 & 6.26 \\
Homo & Indep (transfer) & \underline{2009} & \textbf{274.7} & \underline{368} & \underline{2.94} & \underline{10.35} & \underline{6.36} \\
Homo & MA-USFA (composer) & \textbf{2165} & \underline{288.2} & \textbf{518} & \textbf{2.54} & \textbf{6.60} & \textbf{6.96} \\
\addlinespace
Hetero & Indep (transfer) & 1515 & \underline{273.1} & 337 & 2.81 & 8.57 & 6.49 \\
Hetero & Sync (anchor) & \underline{1527} & 276.2 & \underline{352} & \underline{2.73} & \underline{8.50} & \underline{6.52} \\
Hetero & MA-USFA (composer) & \textbf{1650} & \textbf{270.2} & \textbf{483} & \textbf{2.27} & \textbf{5.48} & \textbf{6.98} \\
\bottomrule
\end{tabular}}
\end{table}

\subsection{Robustness sweeps}

Two sweeps on the homogeneous task support the headline numbers (Table~\ref{tab:tsc-sweep}). The composer already clears both fixed rules after $100$ fine-tuning episodes (return $2156$, throughput $456$), and further episodes buy only small refinements ($2165$ at $200$, $2174$ at $500$), so the $200$-episode budget used above sits near the plateau rather than being a compute-heavy outlier. Enlarging the synchronized rule's source library from one to five entries does not help it and eventually hurts: return holds at $2032$ for $K \in \{1,2,3\}$, dips to $2028$ at $K = 4$, and falls to $1993$ at $K = 5$, because added specialists rarely become the winning shared anchor and the last, contested entry drags the team down in the coupled regime, the same coverage liability the controlled domain isolates (Appendix~\ref{app:toy}). The independent rule ($2009$) sits between the small and large synchronized libraries, and the composer clears every configuration.

\begin{table}[htbp]
\centering
\caption{Supplementary traffic sweeps on the homogeneous task. Top: the MA-USFA composer as a function of the fine-tuning budget in episodes. Bottom: the synchronized rule as a function of source-library size $K$, with the independent rule listed for reference. Columns are as in Table~\ref{tab:tsc}}
\label{tab:tsc-sweep}
\resizebox{\textwidth}{!}{%
\begin{tabular}{llrrrrrr}
\toprule
Sweep & Setting & Return & Travel (s) & Throughput & Queue & Wait (s) & Speed (m/s) \\
\midrule
Composer budget & MA-USFA, $100$ ep. & 2156 & 268.2 & 456 & 2.71 & 6.73 & 6.90 \\
Composer budget & MA-USFA, $200$ ep. & 2165 & 288.2 & 518 & 2.54 & 6.60 & 6.96 \\
Composer budget & MA-USFA, $500$ ep. & 2174 & 277.5 & 493 & 2.40 & 6.70 & 6.94 \\
\addlinespace
Library size & Sync, $K = 1$ & 2032 & 275.4 & 357 & 2.75 & 8.69 & 6.52 \\
Library size & Sync, $K = 3$ & 2032 & 275.4 & 357 & 2.75 & 8.69 & 6.52 \\
Library size & Sync, $K = 4$ & 2028 & 272.2 & 345 & 2.79 & 8.85 & 6.51 \\
Library size & Sync, $K = 5$ & 1993 & 280.1 & 356 & 3.13 & 11.95 & 6.26 \\
Library size & Indep (reference) & 2009 & 274.7 & 368 & 2.94 & 10.35 & 6.36 \\
\bottomrule
\end{tabular}}
\end{table}

\section{Discussions}
\label{app:usfa}

The proposed MA-USFA sits inside a lineage whose generality we rely on at three points, and this appendix states the facts precisely. Universal value function approximators condition the value model directly on the goal, $V(s, g)$, and interpolate across goals by function approximation~\citep{schaul2015universal}. Universal successor feature approximators combine the two ideas in a model $\tilde\psi(s, a, z, w)$ with a policy axis $z$ and a task axis $w$: $z$ indexes the policies in the library, and $w$ shapes the behavior distribution during training and prices the library at test time through a dot product~\citep{borsa2019universal}. With a single goal as the candidate set, USFA reduces to UVFA; with a finite library it reduces to successor features with generalized policy improvement. The operating mode we import into a team is the one USFA was designed for: train once over a distribution of objectives, and at deployment evaluate dot products only, with no per-task adaptation.

MA-USFA uses the same machinery per agent. Agent $i$'s value model $\tilde\psi_i(s, a_i \mid z_i, w_{-i})$ keeps $z_i$ as its own policy axis, whose self-referential TD target $a'_i = \arg\max_b \tilde\psi_i(s', b, z_i)^\top z_i$ has as its fixed point the optimal policy for the task $z_i$; the task axis is carried by the weights, linear in the value, appearing only at test time. The conditioning on the teammates' weights $w_{-i}$ is the new ingredient: each teammate's weight indexes a cluster of teammate policies, so the model learns the expected successor features over the teammate behavior distribution rather than a snapshot against one fixed version of the teammates, which is the architectural answer to Requirement~\ref{req:valid} of Section~\ref{sec:independent}. Two structural facts make the reuse exact rather than approximate. First, when the features are per-agent additive, the joint successor feature of a synchronized entry decomposes as $\psi^{\mathrm{joint},k} = \sum_i \psi^k_i$, so the same network serves both the synchronized anchor and the per-agent library; the sum is the value the synchronized rule prices. Second, the composer is initialized at the independent rule with the value layer frozen, and its correction is trained only to raise team value, so on every objective MA-USFA is at least as good as per-agent USFA and coincides with it wherever the independent rule is already optimal, without any runtime test.